\def\eqref#1{equation~\ref{#1}}
\def\1{\bm{1}}
\def\vtheta{{\bm{\theta}}}
\def\vx{{\bm{x}}}
\def\vy{{\bm{y}}}
\def\vz{{\bm{z}}}
\DeclareMathAlphabet{\mathsfit}{\encodingdefault}{\sfdefault}{m}{sl}
\SetMathAlphabet{\mathsfit}{bold}{\encodingdefault}{\sfdefault}{bx}{n}
\theoremstyle{plain}
\newtheorem{theorem}{Theorem}[section]
\newtheorem{lemma}[theorem]{Lemma}
\theoremstyle{definition}
\newtheorem{definition}[theorem]{Definition}
\newtheorem{assumption}[theorem]{Assumption}
\theoremstyle{remark}
\newtheorem{remark}[theorem]{Remark}
\definecolor{forward}{RGB}{84, 130, 53}
\definecolor{inverse}{RGB}{47, 85, 151}
\definecolor{resist}{RGB}{128, 0, 128}
\definecolor{rebound}{RGB}{133, 19, 33}
\definecolor{def}{RGB}{119, 228, 200}
\definecolor{thm}{RGB}{69, 53, 193}
\newcommand{\forward}[1]{\textbf{\textcolor{forward}{#1}}}
\newcommand{\inverse}[1]{\textbf{\textcolor{inverse}{#1}}}
\newcommand{\resist}[1]{\textit{#1}}
\newcommand{\rebound}[1]{\textit{#1}}
\newtcolorbox{thmbox}[1][]{colback=thm!5!white,colframe=thm!60!black,boxsep=-4pt,grow to left by=4pt,left=10pt,grow to right by=4pt,right=10pt,top=10pt,bottom=10pt,#1}
\newtcolorbox{defbox}[1][]{colback=def!5!white,colframe=def!60!black,boxsep=-4pt,grow to left by=4pt,left=10pt,grow to right by=4pt,right=10pt,top=10pt,bottom=10pt,#1}
\title{Language Models Resist Alignment: Evidence From Data Compression}
\author{{\normalfont Jiaming Ji$^1$\footnotemark[1],} {\normalfont Kaile Wang$^1$\footnotemark[1],} {\normalfont Tianyi Qiu$^1$\footnotemark[1],} {\normalfont Boyuan Chen$^1$\footnotemark[1],} {\normalfont Jiayi Zhou$^1$\footnotemark[1]}\\
{Changye Li$^1$}, {Hantao Lou$^1$},  {Juntao Dai$^{12}$}, {Yunhuai Liu$^3$}, {Yaodong Yang}$^{12\dag}$\\
\vspace{-0.8em} \\
$^1$Institute for Artificial Intelligence, Peking University \\
$^2$Beijing Academy of Artificial Intelligence \\
$^3$School of Computer Science, Peking University 
}
\begin{document}
\maketitle
{
\renewcommand{\thefootnote}{\fnsymbol{footnote}}
\footnotetext[1]{Equal contribution. $^\dag$Corresponding author. If you have any questions, feel free to email \{jiamg.ji, wkl, tianyi.qiu,\\ cbylll, gaiejj\}@stu.pku.edu.cn, yaodong.yang@pku.edu.cn.}
}
\begin{abstract}
Large language models (LLMs) may exhibit unintended or undesirable behaviors.
Recent works have concentrated on aligning LLMs to mitigate harmful outputs. 
Despite these efforts, some anomalies indicate that even a well-conducted alignment process can be easily circumvented, whether intentionally or accidentally. 
Does alignment fine-tuning yield have robust effects on models, or are its impacts merely \textit{superficial}?
In this work, we make the first exploration of this phenomenon from both theoretical and empirical perspectives. Empirically, we demonstrate the \textit{elasticity} of post-alignment models, \textit{i.e.}, the tendency to revert to the behavior distribution formed during the pre-training phase upon further fine-tuning. 
Leveraging compression theory, we formally deduce that fine-tuning disproportionately undermines alignment relative to pre-training, potentially by orders of magnitude.
We validate the presence of \textit{elasticity} through experiments on models of varying types and scales.
Specifically, we find that model performance declines rapidly before reverting to the pre-training distribution, after which the rate of decline drops significantly.
Furthermore, we further reveal that \textit{elasticity} positively correlates with the increased model size and the expansion of pre-training data.
Our findings underscore the need to address the inherent \textit{elasticity} of LLMs to mitigate their resistance to alignment.\footnote{The model weight and code are available at \url{pku-lm-resist-alignment.github.io}.}
\end{abstract}

\begin{figure}[ht]
    \centering
    \includegraphics[width=0.48\textwidth]{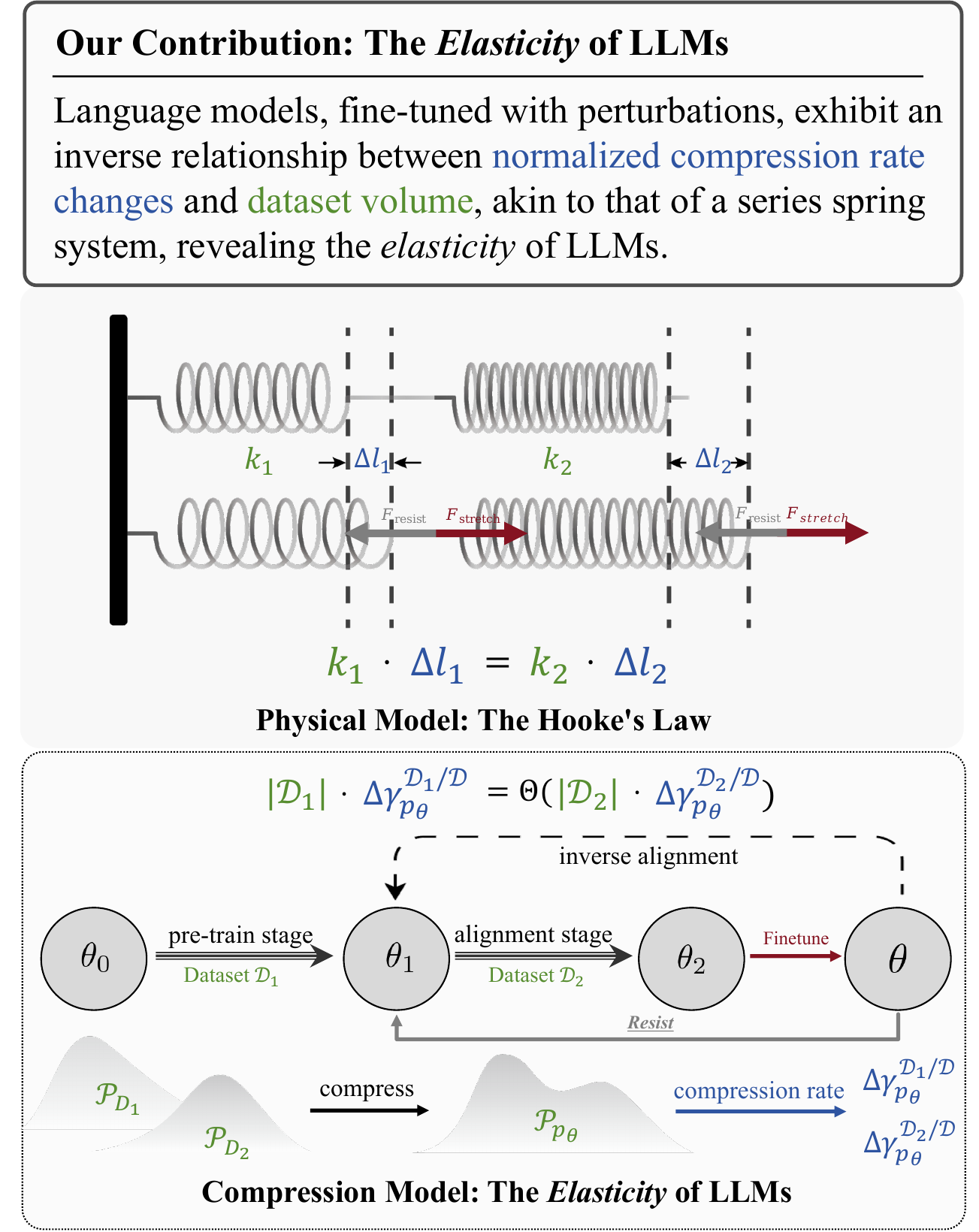}
    \caption{\textbf{The \textit{Elasticity} of Language Models.} The change in normalized compression rates (${\textcolor{inverse}{\Delta\gamma_{p_{\vtheta}}^{\mathcal{D}_i/\mathcal{D}}}}$) and the dataset volume ($\textcolor{forward}{|\mathcal{D}_i|}$) follows an inverse proportionality law after perturbations, which is akin to the relationship between spring deformation ($\textcolor{inverse}{\Delta l_i}$) and stiffness ($\textcolor{forward}{k_i}$) in coupled springs. We conjecture that the \textit{elasticity} causes language models to resist alignment, enabling the possibility of \textit{inverse alignment}. }
    \label{fig: main}
\end{figure}

\section{Introduction}
\label{sec:introduction}

Large language models (LLMs) have shown remarkable capabilities~\cite{achiam2023gpt,zhang2025survey}. However, due to the inevitable biases and harmful content present in training datasets~\citep{bai2022training, ji2024beavertails, qian2024towards, lin2025against}, LLMs often exhibit behaviors that deviate from human intentions, a phenomenon we refer to as \textit{model misalignment}. Training-based alignment methods, including supervised fine-tuning (SFT), reinforcement learning with human feedback (RLHF)~\citep{ouyang2022training}, and other derivatives~\cite{rafailov2024direct, bai2022constitutional, lee2023rlaif, gulcehre2023reinforced, dong2023raft, xiong2024iterative, li2023remax, zhou2023beyond, zhou2025sequence,ji2024aligner}, are the dominant approaches for aligning models. These methods aim to optimize model behavior by rejecting harmful distributions, ensuring LLMs remain consistent with human intentions and values~\cite{ji2023ai, casper2023open}.

However, these alignment methods do not truly penetrate the model representations but merely perform \textit{superficial alignment}~\cite{qi2024safety, cohen2024rl,wen2024language}. Recent studies have shown that highly safety-aligned models can become unsafe again with minimal fine-tuning\cite{yang2023shadow, zhou2024lima}. Furthermore, fine-tuning aligned LLMs on non-malicious datasets may also weaken models' safety mechanisms~\cite{qi2024safety,jain2023mechanistically}. 
\begin{center}
    \textbf{\textit{Why is alignment so fragile?}}
\end{center}

In this work, we make the first exploration of the possible mechanism behind the counterintuitive phenomenon: the existence of an alignment resistance mechanism in language models. This mechanism may limit the alignment process of LLMs to superficial adjustments. It could allow the reversal or revocation of alignment through a series of technical measures, a concept we refer to as \textit{inverse alignment}. What drives language models to resist alignment? How does this mechanism lead to \textit{inverse alignment}? Our key contributions are summarized as follows:

\begin{itemize}[left=0cm]
    \item \textbf{(Phenomenon)} We uncover that language models exhibit \textit{elasticity}, as illustrated in Figure \ref{fig: main} and Theorem \ref{theorem: main}. It encompasses \resist{resistance}: pre-trained models tend to retain their original distribution; and \rebound{rebound}: the deeper alignment of models, the faster they return to the pre-trained distribution under reverse finetuning. Moreover, The model’s change in compression rates $\textcolor{inverse}{\Delta \gamma_{p_\vtheta}^{\mathcal{D}_i/\mathcal{D}}}$ across different datasets is inversely proportional to their sizes $\textcolor{forward}{|\mathcal{D}_i|}$, which is analogous to the deformation behavior of a series of springs, as illustrated in Section \ref{sec: elasticity_mean}. 
    \item \textbf{(Mechanism)} We systematically model the training and alignment process of language models through compression theorem, as detailed in Section \ref{sec: compress_model}. We elaborate on the compression protocol of language models to explore their training and alignment processes, laying a foundation for subsequent research on \textit{elasticity}.
    \item \textbf{(Validation)} We experimentally observe consistent \resist{resistance} and \rebound{rebound} phenomena across various LLMs, as detailed in Section \ref{sec:4}. This highlights the universality of \textit{elasticity} and the need for systematic approaches to achieve robust and deep alignment.
\end{itemize}

\section{Related Work} \label{sec: related_work}
\paragraph{The Fragility of LLMs Alignment} Pre-trained LLMs often generate offensive content. Recent initiatives \cite{ouyang2022training, bai2022training, yang2024alignment} have aimed to align these models to minimize harmful outputs \cite{yang2023rlcd, cui2024ultrafeedback}. However, research indicates that even well-aligned models can be compromised easily, and fine-tuning them on non-malicious datasets might unintentionally impair their safety mechanisms \cite{yang2023shadow, qi2024finetuning, hubinger2024sleeper, dong2024attacks}. Moreover, recent study reveals that models selectively adhere to training objectives during the training phase to preserve their inherent preferences \cite{greenblatt2024alignment, lang2024your, park2024ai}. Why is alignment so fragile? \citet{wei2024assessing} use weight attribution to separate safety-critical and utility-related regions at both neuron and rank levels. \citet{qi2024safety} propose the concept of shallow safety alignment, arguing that safety alignment should reach beyond surface-level tokens to shape the model’s internal mechanisms.

\section{What is \textit{Elasticity}?}
\label{sec:2}
We discover that language models exhibit elasticity, which leads to resistance to alignment. In this section, we formally introduce the definition of \textit{elasticity}, along with the compression theory tools used in the analysis. Firstly, we review the training
alignment objective and the compression theorem.

\subsection{Preliminaries}
\paragraph{Pre-training.} During pre-training, an LLM acquires foundational language comprehension and reasoning abilities by processing vast quantities of unstructured text. The pre-train loss $\mathcal{L}_{\text{PT}}(\vtheta ; \mathcal{D}_{\text{PT}})$ is defined as follows:
\begin{align*}
\label{eq:pretrain}
\mathcal{L}_{\text{PT}}(\vtheta ; \mathcal{D}_{\text{PT}}) = - \mathbb{E}_{(\vx,x_{N}) \sim \mathcal{D}_{\mathrm{PT}}}\left[ \log p_{\vtheta} \left(x_{N} \big| \vx \right) \right],
\end{align*}
where $\vx=(x_0,\cdots,x_{N-1})$ and $N \in \mathbb{N}$, such that $(x_0,\cdots,x_N)$ forms a prefix in some piece of pre-training text. $\mathcal{D}_{\text{PT}}$ stands for pre-training dataset.

\paragraph{Supervised Fine-tuning (SFT).} SFT adjusts the pre-trained models to follow specific instructions, utilizing a smaller dataset compared to the pre-training corpus to ensure model alignment with target tasks. 
For $\mathcal{D}_{\mathrm{SFT}} = \left\{\left( \vx^{i}, \vy^{i} \right) \right\}_{i=1}^{N}$ sampled from a high-quality distribution, SFT aims to minimize the negative log-likelihood loss:
\begin{equation*}
\label{eq:sft}
\mathcal{L}_{\text{SFT}}(\vtheta ; \mathcal{D}_{\text{SFT}}) = - \mathbb{E}_{(\vx,\bf{y) \sim \mathcal{D}_{\mathrm{SFT}}}}\left[ \log p_{\vtheta} \left(\vy \big| \vx \right) \right].
\end{equation*}
Given that $\mathbb{E}_{(\vx,\bf{y) \sim \mathcal{D}_{\text{SFT}}}}\left[  \log{p_{\mathcal{D}} \left(\vy \big| \vx \right)} \right]$ is fixed when specifying $\mathcal{D}_{\text{SFT}}$, the optimization objective $\mathcal{L}_{\text{SFT}}$ becomes the Kullback-Leibler (KL) divergence between $p_{\vtheta}$ and the SFT distribution.

\paragraph{Lossless Compression.}
The goal of lossless compression is to find a compression protocol that encodes a given dataset $\mathcal{D}$ and its distribution $\mathcal{P}_{\mathcal{D}}$ with the smallest possible expected length and allows for a decoding scheme that can perfectly reconstruct the original dataset. According to Shannon's source coding theorem \cite{shannon1948mathematical}, for a random variable follows $\mathcal{P}_{\mathcal{D}}$, the expected code length $\mathcal{L}$ of any lossless compression protocol satisfies:
\begin{equation*}
    \mathcal{L} \geq H\left(\mathcal{P}_{\mathcal{D}}\right),
\end{equation*}
where $H\left(\mathcal{P}_{\mathcal{D}}\right)$ is the Shannon entropy of $\mathcal{P}_{\mathcal{D}}$.

\paragraph{Compression and Prediction.}
Compression and prediction are tightly interconnected. Consider a model $p_{\vtheta}$ and $\vx=\left(x_{0},\cdots,x_{m-1}\right)$ derived from a dataset $\mathcal{D}$, the expected code length $\mathcal{L}$ under arithmetic coding \cite{witten1987arithmetic} satisfies:
\begin{equation*}
   \mathcal{L} = \mathbb{E}_{\vx \sim \mathcal{D}} \left[\sum_{0\leq k\leq m}-\log_{2}p_{\vtheta}\left(x_{k}\big|x_{0,\cdots,k-1}\right)\right],
\end{equation*}
which is the current training objective of language models. Minimizing log-likelihood loss is equivalent to minimizing the compression rate when models act as a lossless compressor. Thus, optimal compression and prediction are equivalent \cite{deletang2023language, hutter2005universal}. Experiments show the equivalence between large language model prediction and compression \cite{deletang2023language}, and that compression performance correlates linearly with intelligence \cite{huang2024compression}.

\subsection{The Compression Protocol of LLMs} \label{sec: compress_model}
We aim to study the dynamic changes in language models during training and alignment. Given the equivalence between language model training and data compression \cite{deletang2023language}, a feasible modeling approach is to treat the language model as a lossless compression protocol. The process of training and aligning the model on different datasets can be equivalently viewed as the joint compression of these datasets by the protocol. The model’s compression rate on various datasets serves as a surrogate metric for the loss during training and alignment. In this section, we detail the modeling specifics of this compression protocol.

Considering the impact of tokenization on compression rate, we use tokenized sequences as input and output modalities. For simplicity, we assume the tokenized vocabulary consists of only binary tokens (specifically \texttt{0/1}).

\begin{defbox}
\begin{definition}[Token Tree $\mathcal{T}$]\label{def: token_tree} 
For a dataset $\mathcal{D} = \{ \vz_{i} \in \{0|1\}^{\infty} \mid i = 1,2, \cdots \}$, the token tree of $\mathcal{D}$, denoted as $\mathcal{T}_{\mathcal{D}}$, is defined as follows: each node has child nodes labeled \texttt{0} or \texttt{1}, along with an end-of-sequence (EOS) leaf node. The path from the root to a leaf node defines each response $\vz_{i}$, with the corresponding EOS node weight representing the response's probability while the weight of non-leaf nodes is the sum of the weights of their child nodes.
\end{definition}
\end{defbox}

In token tree modeling, the model's training process is conceptualized as learning the node weights of the token tree. However, due to the finite size of the model's parameters in practical scenarios, the model cannot accurately capture node weights at arbitrary depths of the token tree. Therefore, we propose the following assumption.

\begin{assumption}[Scale of $\mathcal{T}$ is Monotone with Model Size]\label{ass: size_to_node}
Consider a parameterized model $p_{\vtheta}\left(\cdot\right)$ and a dataset $\mathcal{D}$, We assume that the depth of the portion of $\mathcal{T}_{\mathcal{D}}$ that can be perfectly modelled by $p_{\vtheta}$ is monotonically increasing with the size of $\vtheta$.
\end{assumption}

Based on the above assumption, we can define the compression protocol during the model training and alignment process.

\begin{defbox}
\begin{definition}[The Compression Protocal]\label{def: compress_of_model}
Consider using the model $p_{\vtheta}\left(\cdot\right)$ to compress the dataset $\mathcal{D}$. The compression protocol is defined in two steps: a) Prune the token tree of $\mathcal{D}$, retaining only the top $d$ layers. b) Apply Huffman coding \cite{huffman1952method} to compress the pruned token tree. Specifically, each response from the root node to a leaf node is treated as a symbol in the Huffman coding alphabet, and the weight of the leaf node is the probability of the symbol.
\end{definition}
\end{defbox}

Due to the optimality and losslessness of Huffman coding \cite{huffman1952method}, the compression protocol ensures optimal compression while preserving losslessness. We can therefore calculate the model's ideal code length and other information-theoretic metrics based on the protocol.
\begin{thmbox}
\begin{theorem}[Ideal Code Length]\label{the: code_length}
Consider a finite parameter model $p_{\vtheta}\left(\cdot\right)$ training on dataset $\mathcal{D}$, the ideal code length $\mathcal{L}_{p_{\vtheta}}\left(\vx\right)$ of a random response $\vx$ compressed by $p_{\vtheta}$ can be expressed as:
\begin{align*}\mathbb{E}\left[\mathcal{L}_{p_{\vtheta}}\left(\vx\right)\right] = \left\lceil\frac{\big|\vx\big|}{d}\right\rceil\left\lceil-\sum_{l=1}^{d}\sum_{j=1}^{2^{l-1}}p_{lj}\log{p_{lj}}\right\rceil
\end{align*}
where $d$ represents the depth of the $\mathcal{T}_{\mathcal{D}}$ after pruning under Definition \ref{def: compress_of_model} protocol, and $p_{lj}$ represents the probability values of the leaf nodes for the $j$-th node at the $l$-th layer.
\end{theorem}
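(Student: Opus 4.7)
The plan is to decompose the expected code length into two factors corresponding to the two ceilings: the number of "chunks" required to cover the response $\vx$, and the expected Huffman code length for a single chunk. This decomposition is natural because, under Definition \ref{def: compress_of_model}, the Huffman alphabet consists precisely of the root-to-leaf paths in the depth-$d$ pruned token tree $\mathcal{T}_{\mathcal{D}}$. Each such path represents at most $d$ consecutive tokens of the response, so a response of length $|\vx|$ is encoded as a sequence of Huffman symbols drawn from this alphabet.

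First I would make the chunking argument precise. A response of length $|\vx|$, viewed as a walk down the (unpruned) token tree, must be partitioned into segments each corresponding to a path of length at most $d$ in the pruned tree; the minimal number of such segments is $\lceil |\vx|/d \rceil$. The last segment may be shorter than $d$, but it is still encoded as one Huffman symbol (the appropriate prefix path), so it contributes one full unit to the chunk count. By linearity of expectation and the assumption that chunks are i.i.d.\ draws from the leaf distribution of the pruned tree, the expected total code length factors as
\begin{equation*}
\mathbb{E}\left[\mathcal{L}_{p_{\vtheta}}(\vx)\right] \;=\; \left\lceil \frac{|\vx|}{d}\right\rceil \cdot \mathbb{E}\left[\mathcal{L}_{p_{\vtheta}}(\text{single chunk})\right].
\end{equation*}

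Next I would analyze the per-chunk expected length. The leaves of the pruned tree are indexed by their depth $l \in \{1,\dots,d\}$ and position $j \in \{1,\dots,2^{l-1}\}$ (the EOS leaves hanging off the $2^{l-1}$ internal nodes at depth $l-1$, together with the $2^{d-1}$ frontier leaves at the pruning boundary), with probabilities $p_{lj}$ summing to one. The Shannon entropy of this chunk alphabet is exactly $H \;=\; -\sum_{l=1}^{d}\sum_{j=1}^{2^{l-1}} p_{lj}\log p_{lj}$. Because Huffman coding assigns integer-length codewords and is optimal among prefix codes, the expected code length per symbol is bounded above by $\lceil H \rceil$ in the worst case where the entropy must be rounded up to a whole number of bits to realize a valid prefix code. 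Combining this with the chunk decomposition yields the claimed formula.

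The main obstacle I anticipate is justifying the inner ceiling $\lceil H \rceil$ rigorously, since the textbook bound for Huffman coding is only $H \le L_{\text{Huffman}} < H+1$, which in general gives $L_{\text{Huffman}} \le \lceil H \rceil + 1$ rather than $\lceil H \rceil$. To close this gap I would appeal to the authors' notion of \emph{ideal} code length, interpreting it as the integer-bit idealization of the entropy (i.e., the rounded-up information content per symbol), so that the per-chunk ideal length is $\lceil H \rceil$ by definition. A secondary subtlety is bookkeeping the leaf count $2^{l-1}$ at each depth, which requires tracking the tree's branching convention (binary children plus an EOS child at each non-leaf node) to confirm that the sum in the statement indeed ranges over all pruned-tree leaves and thus coincides with the entropy of the chunk distribution.
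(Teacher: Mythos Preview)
Your approach is essentially the same as the paper's: decompose $\vx$ into $\lceil |\vx|/d\rceil$ segments of length at most $d$, apply linearity of expectation (the paper writes out the conditioning step explicitly to justify the independence you assume), and invoke the single-segment base case $\lceil H\rceil$ for each chunk. Your caution about the inner ceiling is well placed---the paper simply asserts $\mathbb{E}[\mathcal{L}_{p_{\vtheta}}(\vx)]=\lceil H\rceil$ for the base case $|\vx|\le d$ without further justification, so you are not missing anything the paper supplies.
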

\end{thmbox}

Since training and alignment involve multiple datasets with different and independent distributions, we consider the joint compression scenario for multiple datasets. For $N$ pairwise disjoint datasets $\mathcal{D}_{1}, \cdots, \mathcal{D}_{N}$, the node weights $p_{lj}^{\mathcal{D}}$ of the token tree for the dataset $\mathcal{D} = \bigcup_{i=1}^{N} \mathcal{D}_{i}$ satisfy:
\begin{align*}
p_{lj}^{\mathcal{D}}=\frac{\sum_{i=1}^{N}p_{l}^{\mathcal{D}_{i}}\big|\mathcal{D}_{i}\big|}{\sum_{i=1}^{N}\big|\mathcal{D}_{i}\big|},
\end{align*}
where $p_{lj}^ {\mathcal{D}_{i}}$ stands for the probability value for nodes in $\mathcal{T}_{\mathcal{D}_{i}}$ while $\big|\mathcal{D}_{i}\big|$ represents the size of $\mathcal{D}_{i}$. Thus, the compression rate $\gamma_{p_{\vtheta}}^{\mathcal{D}_{i}}$ on specific datasets can also be defined accordingly:
\begin{align*}
    \gamma_{p_{\vtheta}}^{\mathcal{D}_{i}} &= \mathbb{E}_{\vx \sim \mathcal{P}_{i}}\left[ \frac{\mathbb{E}_{\vx\sim\mathcal{P}_{i}}\left[\mathcal{L}_{p_{\vtheta}}^{\mathcal{D}_{i}}(\vx) \right]}{\big| \vx \big|} \right] \\
    &= \Theta\left(-\frac{\sum_{l=1}^{d}\sum_{j=1}^{2^{l-1}}p_{lj}^{\mathcal{D}_{i}}\log{p_{lj}^{\mathcal{D}}}}{d}\right).
\end{align*}

Here, the compression rate is defined as the compressed encoding length divided by the original length \cite{deletang2023language} and ensures consistency between the training and compression objective, which means that minimizing the training loss is equivalent to minimizing the compression rate.
Please see Appendix \ref{app:theory_assumptions_and_proofs} for more details.

\subsection{The Formal Definition of \textit{Elasticity}}

To formalize \textit{inverse alignment} and \textit{elasticity}, we provide precise definitions of these concepts.
\begin{defbox}
\begin{definition}[\textit{Inverse Alignment}]
Given a language model \(p_{\vtheta_0}\), aligned on dataset \(\mathcal{D}_{a}\) to produce the aligned model \(p_{\vtheta_1}\). For any \(\epsilon\) > 0, if applying a dataset \(\mathcal{D}_{b}\) (where \(\vert \mathcal{D}_{b}\vert \ll \vert \mathcal{D}_{a}\vert\)) to \(p_{\vtheta_1}\) yields \(p_{\vtheta_0^{\prime}}\) such that \(\rho(p_{\vtheta_0^{\prime}}, p_{\vtheta_0}) \leq \epsilon\) for a given eval metric \(\rho\), we define the transition from \(p_{\vtheta_1}\) back to \(p_{\vtheta_0^{\prime}}\) as \textit{inverse alignment}.
\end{definition}
\end{defbox}

\begin{defbox}
\begin{definition}[The \textit{Elasticity} of LLMs]
Consider a language model $p_{\vtheta_0}$ and transformation $p_{\vtheta_0} \xmapsto{f(\mathcal{D}_a)} p_{\vtheta_1}$, \textit{elasticity} is said to exist in $\left(p_{\vtheta_0},\mathcal{D}_a\right)$ if there is an algorithmically simple \textit{inverse operation} \( g \) and a dataset \( \mathcal{D}_b \) such that \( \vert\mathcal{D}_b\vert \ll \vert\mathcal{D}_a\vert \), with the property that:
\[
p_{\vtheta_1} \xmapsto{g(\mathcal{D}_b)} p_{\vtheta_0^{\prime}}
\text{~and~}
\rho(p_{\vtheta_0^{\prime}}, p_{\vtheta_0}) \leq \epsilon_0.
\]
where $\epsilon_0$ is a constant.
\end{definition}
\end{defbox}
The eval metrics $\rho$ can be viewed as a measure of behavioral and distributional proximity between models. In the context of the compression theorem, we use compression rate $\gamma_{p_\vtheta}^{\mathcal{D}_i}$ as the metric $\rho$ to evaluate \textit{elasticity} during the alignment process.

\section{Why \textit{Elasticity} Affects Alignment?} \label{sec: 3}
 
By now, we hope to have convinced the reader of the concept of compression modeling in language models’ training and alignment. In the following, we apply this to analyze language models' training and alignment process, focusing on the underlying reasons that lead the model to alignment resistance.

In Figure \ref{fig: main}, we have already presented the theorem of \textit{elasticity}: when subject to fine-tuning perturbations, language models tend to retain the distribution associated with larger datasets while rejecting that of smaller ones. But why \textit{elasticity} resists alignment? Is there a corresponding \textit{elastic} invariant? How does \textit{elasticity} make \textit{inverse alignment} possible? In this section, we will formalize \textit{elasticity} of language models and analyze the impact of \textit{elasticity} on the model's behavior.

\subsection{Formal Derivation of \textit{Elasticity}}

Our primary goal is to investigate the behavioral changes of a language model after pre-training and alignment, particularly in response to perturbations, typically caused by fine-tuning with a minimal dataset. We use compression rate as an evaluation metric to assess these changes. In analyzing the model's behavior, we focus on two datasets: $\mathcal{D}_p$, a larger dataset representing pre-training or the primary objective of training, and $\mathcal{D}_a$, a smaller dataset representing the alignment process or the secondary objectives of training. Since the pre-training stage encompasses a wide range of data, without loss of generality, we assume that the datasets in the alignment process follow the same distribution as some subset of the pre-training data.

\begin{definition}[Normalized Compression Rate]
    For $N$ distinct datasets $\mathcal{D}_{1}, \cdots, \mathcal{D}_{N}$ and a parameter model $p_{\vtheta}$ compressing $\mathcal{D}=\bigcup_{i=1}^{N}\mathcal{D}_{i}$, the normalized compression rate $\gamma_{p_{\vtheta}}^{\mathcal{D}_{i}/\mathcal{D}}$ for a particular dataset $\mathcal{D}_{i}$ is defined as:
    \begin{equation}
\gamma_{p_{\vtheta}}^{\mathcal{D}_{i}/\mathcal{D}} = \gamma_{p_{\vtheta}}^{\mathcal{D}_{i}} - \log{M},
    \end{equation}
    where $M$ is the number of leaf nodes of the pruned tree $\mathcal{T}_{i}^{\prime}$ of dataset $\mathcal{D}_{i}$. 
\end{definition}

The normalized compression rate allows for comparing the model's compression performance across different datasets. The smaller the normalized compression rate of a dataset, the better the model's compression performance on the dataset.

With this definition in hand, we proceed to our main result: language models exhibit \textit{elasticity}.

\begin{thmbox}
\begin{theorem}[\textit{Elasticity} of Language Models]\label{theorem: main}
Consider the pre-training dataset $\mathcal{D}_p = \bigcup_{i=1}^{3}\mathcal{D}_{i}$, the alignment dataset $\mathcal{D}_a$, and the perturbation dataset $\mathcal{D}_t$, with the model $p_{\vtheta}(\cdot)$ trained on $\mathcal{D}=\mathcal{D}_{p} \cup \mathcal{D}_{a} \cup \mathcal{D}_{t}$. Assume that $\mathcal{D}_a \overset{d}{\sim} \mathcal{D}_{2}$, $\mathcal{D}_t \overset{d}{\sim} \mathcal{D}_{3}$, and $\mathcal{D}_{1},\mathcal{D}_{2},\mathcal{D}_{3}$ are each distributed according to a Pareto mass distribution \cite{newman2005power}. As the perturbation data volume $|\mathcal{D}_t|$ varies, $\gamma_{p_{\vtheta}}^{\mathcal{D}_p/\mathcal{D}}$ and $\gamma_{p_{\vtheta}}^{\mathcal{D}_a/\mathcal{D}}$ satisfy:
\begin{equation}
    \begin{split}
        \frac{d\gamma_{p_{\vtheta}}^{\mathcal{D}_a/\mathcal{D}}}{d\,l} &= \Theta \left( -k \frac{ d\gamma_{p_{\vtheta}}^{\mathcal{D}_p/\mathcal{D}}}{d\,l} \right),\\
    \frac{d\gamma_{p_{\vtheta}}^{\mathcal{D}_p/\mathcal{D}}}{d\,l} &< 0,
    \frac{d\gamma_{p_{\vtheta}}^{\mathcal{D}_a/\mathcal{D}}}{d\,l} > 0,
    \end{split}
\end{equation} \label{eq: main_theorem}
where $l = \frac{|\mathcal{D}_t|}{|\mathcal{D}_a|} \ll 1$, $k = \frac{|\mathcal{D}_p|}{|\mathcal{D}_a|} \gg 1$, and $\{\mathcal{D}_{i}\}_{i=1}^3 $ are datasets of equal cardinality.
\end{theorem}
\end{thmbox}

Theorem \ref{theorem: main} shows that as the perturbation increases, the normalized compression rates of the model for $\mathcal{D}_p$ decrease and $\mathcal{D}_a$ increase and the rate of changes is strongly correlated with the size of the datasets. Unlike the proportional changes in compression rates across different datasets, the language model seems to \textit{prefer} the dataset with a larger volume, leading to biased model behavior after the perturbation.

\subsection{\textit{Elasticity} and \textit{Inverse Alignment}}

Theorem \ref{theorem: main} shows the normalized compression rate change of different datasets is inversely proportional to their sizes under perturbations. A significant size difference between the pre-training and alignment datasets causes rapid performance degradation on the alignment dataset due to the inverse relationship, often spanning several orders of magnitude.

Intuitively, the model's compression process across multiple datasets is akin to resource allocation between towns: in a region with both a large metropolis and rural villages, to maximize overall economic productivity, resources are typically allocated to the metropolis first, to leverage its scale and agglomeration effects. In other words, the metropolis/large dataset occupies a dominant position in the system.

Therefore, the \textit{elasticity} of language models makes \textit{inverse alignment} possible. Due to the substantial data volume disparity between pre-training and alignment datasets, the model tends to revert to the pre-training rather than alignment distribution when subsequent perturbations occur. This indicates an inherent tendency toward \textit{inverse alignment} during fine-tuning. Maximizing the impact of \textit{elasticity} through well-designed perturbations will be key to achieving true \textit{inverse alignment}.

\subsection{\textit{Elasticity} and the Hooke's Law.}\label{sec: elasticity_mean}

The inverse proportionality result in Theorem \ref{theorem: main} provides a potential invariant in the model training and alignment process: after perturbation, the rate of change in the compression rates of different datasets is inversely proportional to their sizes, with the absolute value of the product being a constant. This constant characterizes the impact of the perturbation on the model and indirectly describes the model's resistance to perturbations, or its \textit{elasticity}.

The \textit{elasticity} of the model can be intuitively analogized to a series system of springs, as shown in Figure \ref{fig: main}. Consider two massless springs in series, with spring constants $\textcolor{forward}{k_1}$ and $\textcolor{forward}{k_2}$, respectively. When the entire system undergoes deformation due to an external force $F$, the system reaches a stable state, and the elastic force exerted by each spring is equal to $F$. According to Hooke's Law \cite{hooke2016lectures}, the elongation $\textcolor{inverse}{\Delta l_1}$ and $\textcolor{inverse}{\Delta l_2}$ of each spring is inversely proportional to its spring constant. Thus, in this system, we have:
\begin{equation*}
    F \propto \textcolor{forward}{k_1} \cdot\textcolor{inverse}{\Delta l_1} = \textcolor{forward}{k_2} \cdot\textcolor{inverse}{\Delta l_2} \;.
\end{equation*}

In the language model setting, after integrating Theorem \ref{theorem: main} to $l$, we obtain $\textcolor{inverse}{\Delta \gamma_{p_{\vtheta}}^{\mathcal{D}_i/\mathcal{D}}}$ across different datasets, which is equivalent to the change in the KL divergence $\textcolor{inverse}{\Delta D_{\text{KL}}(\mathcal{P}_{p_{\vtheta}}||\mathcal{P}_{\mathcal{D}_{i}})}$ between the model's distribution and the distributions of the individual datasets, is inversely proportional to the size of the datasets $\textcolor{forward}{|\mathcal{D}_i|}$. Here, we only consider the absolute value of $\Delta D_{\text{KL}}$. Analogous to the series spring model, the \textit{elasticity} $F$ in LLMs satisfies:
\begin{equation}
    F \propto \textcolor{forward}{|\mathcal{D}_i|} \cdot \textcolor{inverse}{\Delta D_{\text{KL}}(\mathcal{P}_{p_{\vtheta}}||\mathcal{P}_{\mathcal{D}_{i}})}\;,
\end{equation}

where $\textcolor{inverse}{\Delta D_{\text{KL}}}$ corresponds to $\textcolor{inverse}{\Delta l}$ in the spring model, while $\textcolor{forward}{|\mathcal{D}|}$ corresponds to the spring constant $\textcolor{forward}{k}$, thus leading to the \textit{elasticity} of LLMs.

\begin{table*}[ht]
\renewcommand{\arraystretch}{1}
\centering
\resizebox{0.85\textwidth}{!}{
\begin{tabular}{clccc}
\toprule
Datasets & Base Models & \inverse{$\vtheta_2 \rightarrow \vtheta_1$} \textit{vs.} \forward{$\vtheta_1 \rightarrow \vtheta_2$} & \inverse{$\vtheta_3 \rightarrow \vtheta_2$} \textit{vs.} \forward{$\vtheta_2 \rightarrow \vtheta_3$} & \inverse{$\vtheta_3 \rightarrow \vtheta_1$} \textit{vs.} \forward{$\vtheta_1 \rightarrow \vtheta_3$} \\
\midrule
\multirow{3}{*}{\textbf{Alpaca}} & Llama2-7B & 0.1589 \inverse{$\downarrow$} ~~ 0.2018 \forward{$\uparrow$} & 0.1953 \inverse{$\downarrow$} ~~  0.2143 \forward{$\uparrow$} & 0.1666 \inverse{$\downarrow$} ~~  0.2346 \forward{$\uparrow$} \\
& Llama2-13B & 0.1772 \inverse{$\downarrow$} ~~ 0.1958 \forward{$\uparrow$} & 0.2149 \inverse{$\downarrow$} ~~  0.2408 \forward{$\uparrow$} & 0.1835 \inverse{$\downarrow$} ~~  0.2345 \forward{$\uparrow$} \\
& Llama3-8B & 0.2540 \inverse{$\downarrow$} ~~ 0.2573 \forward{$\uparrow$} & 0.2268 \inverse{$\downarrow$} ~~  0.3229 \forward{$\uparrow$} & 0.2341 \inverse{$\downarrow$} ~~  0.2589 \forward{$\uparrow$} \\
\multirow{3}{*}{\textbf{Truthful}} & Llama2-7B & 0.1909 \inverse{$\downarrow$} ~~ 0.2069 \forward{$\uparrow$} & 0.1719 \inverse{$\downarrow$} ~~  0.1721 \forward{$\uparrow$} & 0.2011 \inverse{$\downarrow$} ~~  0.2542 \forward{$\uparrow$} \\
& Llama2-13B & 0.1704 \inverse{$\downarrow$} ~~ 0.1830 \forward{$\uparrow$} & 0.1544 \inverse{$\downarrow$} ~~  0.1640 \forward{$\uparrow$} & 0.1825 \inverse{$\downarrow$} ~~  0.2429 \forward{$\uparrow$} \\
& Llama3-8B & 0.2118 \inverse{$\downarrow$} ~~ 0.2256 \forward{$\uparrow$} & 0.2100 \inverse{$\downarrow$} ~~  0.2173 \forward{$\uparrow$} & 0.2393 \inverse{$\downarrow$} ~~  0.2898 \forward{$\uparrow$} \\
\multirow{3}{*}{\textbf{Safe}} & Llama2-7B & 0.2730 \inverse{$\downarrow$} ~~ 0.2809 \forward{$\uparrow$} & 0.2654 \inverse{$\downarrow$} ~~  0.2691 \forward{$\uparrow$} & 0.2845 \inverse{$\downarrow$} ~~  0.2883 \forward{$\uparrow$}\\
& Llama2-13B & 0.2419 \inverse{$\downarrow$} ~~ 0.2439 \forward{$\uparrow$} & 0.2320 \inverse{$\downarrow$} ~~  0.2327 \forward{$\uparrow$} & 0.2464 \inverse{$\downarrow$} ~~  0.2606 \forward{$\uparrow$} \\
& Llama3-8B & 0.2097 \inverse{$\downarrow$} ~~ 0.2156 \forward{$\uparrow$} & 0.2008 \inverse{$\downarrow$} ~~  0.2427 \forward{$\uparrow$} & 0.2277 \inverse{$\downarrow$} ~~  0.2709 \forward{$\uparrow$} \\
\bottomrule
\end{tabular}
}
\caption{\textbf{Comparsion between \inverse{\textit{inverse alignment}} and \forward{\textit{forward alignment}} training loss.} Under different model, task, and stage slicing settings, \forward{\textit{forward alignment}} is more challenging than \inverse{\textit{inverse alignment}}, \textit{i.e.,} the loss is higher. This indicates that pre-trained models tend to maintain their original distribution.
}
\label{tab:parameter_results}
\end{table*}

\section{How \textit{Elasticity} Resists Alignment?} \label{sec:4}

In the previous sections, we proved that LLMs have \textit{elasticity}. This section will analyze two specific phenomenons of it: 

\begin{itemize}
    \item \textbf{\resist{Resistance} for Pre-Trained Models.} Models tend to maintain the original distribution and resist alignment;
    \item \textbf{\rebound{Rebound} for Post-Trained Models.} Fine-tuning in the opposite direction of post-training (\textit{e.g.}, safe \textit{vs}. unsafe) causes post-trained models to return quickly to the pre-training distribution.
\end{itemize}

\subsection{Existence of Language Models' \resist{Resistance}}
\label{exp:resistance}

\begin{figure}[ht]
    \centering
    \includegraphics[width=\columnwidth]{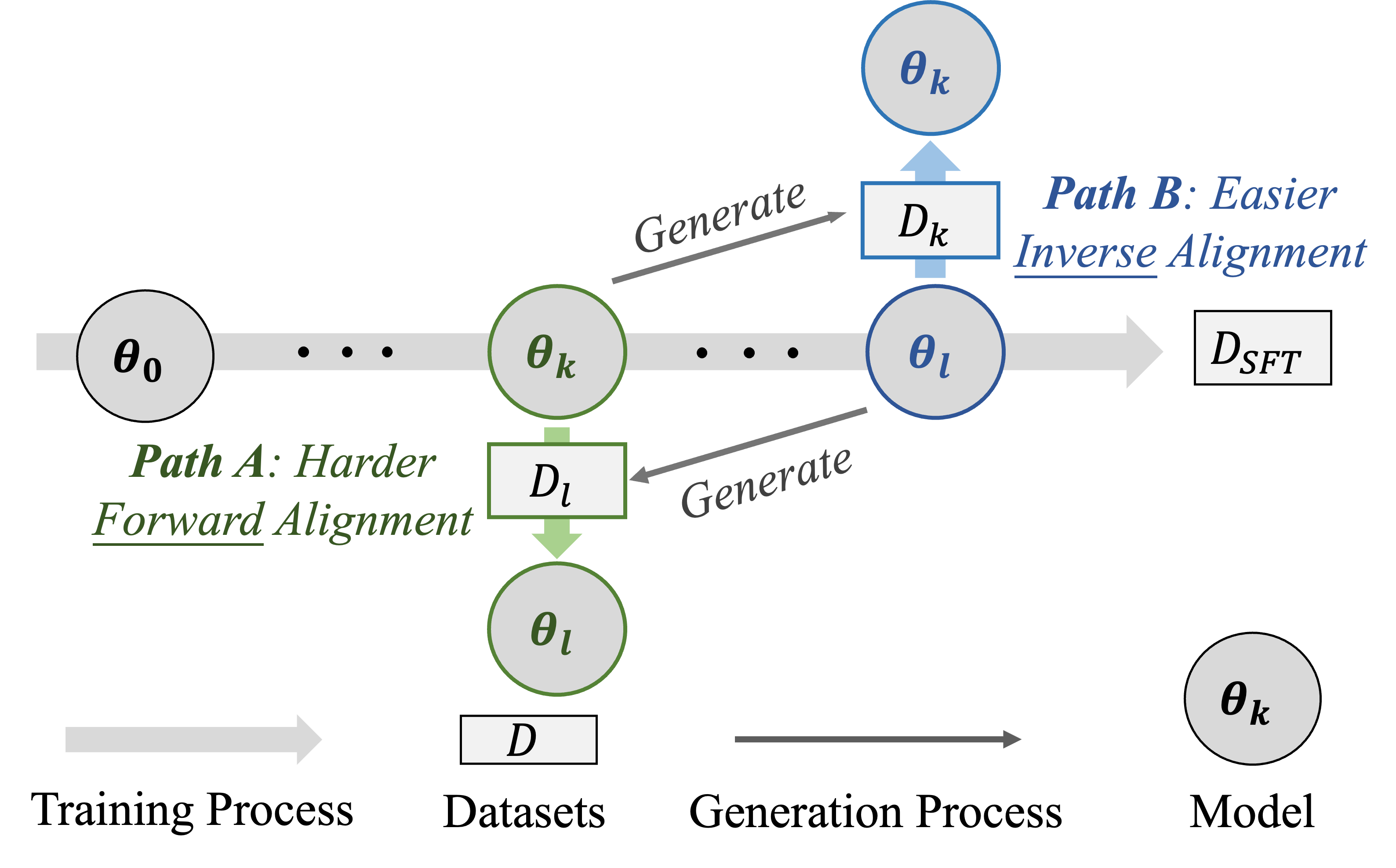}
    \caption{\textbf{Experiment pipeline for validating \resist{resistance}.} We conceptualize \resist{resistance} as: \inverse{\textit{inverse alignment}} is easier than \forward{\textit{forward alignment}}.}
    \label{fig:exp_1}
\end{figure}

\paragraph{Experiment Design.} We verify the existence of \resist{resistance} by arguing that \forward{\textit{forward alignment}} is harder than \inverse{\textit{inverse alignment}} for pre-trained models. Specifically, we first perform one epoch of SFT on a pre-trained LLM with parameters \(\vtheta_0\), saving the slices \(\{\vtheta_1, \vtheta_2, \ldots, \vtheta_n\}\). Subsequently, without loss of generality, we collect the responses of slices \(\vtheta_{k}\) and \(\vtheta_{l}\) (where $k < l$) on hold-out prompts, forming datasets \(D_{k}\) and \(D_{l}\). As shown in Figure \ref{fig:exp_1}, we define \forward{\textit{forward alignment} (\textbf{\textit{Path A}})} as the process of training \forward{\(\vtheta_{k}\)} on \(D_{l}\), and \inverse{\textit{inverse alignment} (\textbf{\textit{Path B}})} as the process of training \inverse{\(\vtheta_{l}\)} on \(D_{k}\).

\paragraph{Experiment Setup.} We select different SFT datasets including Alpaca \cite{taori2023stanford}, TruthfulQA \cite{lin2022truthfulqa}, and Beavertails \cite{ji2024beavertails, ji2024pku}, which correspond respectively to the model's 3H principle \cite{askell2021general}.  We divide them into three equal parts to obtain three corresponding slices \(\{\vtheta_1, \vtheta_2, \vtheta_3\}\). We consider Llama2-7B, Llama2-13B, and Llama3-8B \citep{touvron2023llama} as the base model $\vtheta_0$. 

\paragraph{Experiment Results.} As shown in Table \ref{tab:parameter_results}, the experimental results indicate that the training loss of \inverse{\textit{inverse alignment}} consistently remains lower than that of \forward{\textit{forward alignment}}, irrespective of the slice pair selection. This observation holds across all models and datasets in the experiments. All experimental results demonstrate that \inverse{\textit{inverse alignment}} is easier than \forward{\textit{forward alignment}} across diverse models and datasets and validate the existence of \resist{resistance}. To further verify the \textit{resistance} in language models, we provide additional experimental details and results in the Appendix \ref{app: exp_resistance}.

\subsection{Existence of \rebound{Rebound}}
\label{exp:rebound}

\begin{figure}[ht]
    \centering
    \includegraphics[width=\columnwidth]{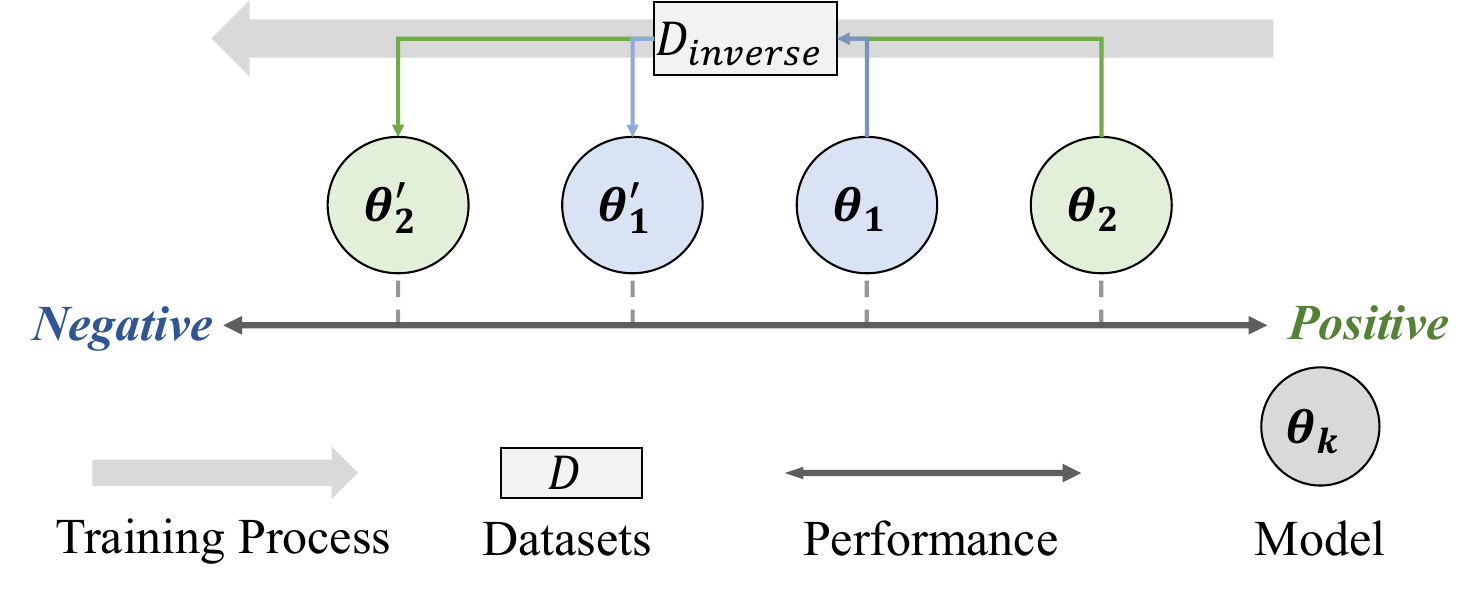}
    \caption{\textbf{Experiment pipeline for validating \rebound{rebound}.} We conceptualize \rebound{rebound} as: the more \forward{positive} the post-trained models' performance, the more \inverse{negative} it becomes after inverse finetuning.}
    \label{fig:exp_2}
\end{figure}

\begin{figure*}[ht]
    \centering\includegraphics[width=\textwidth]{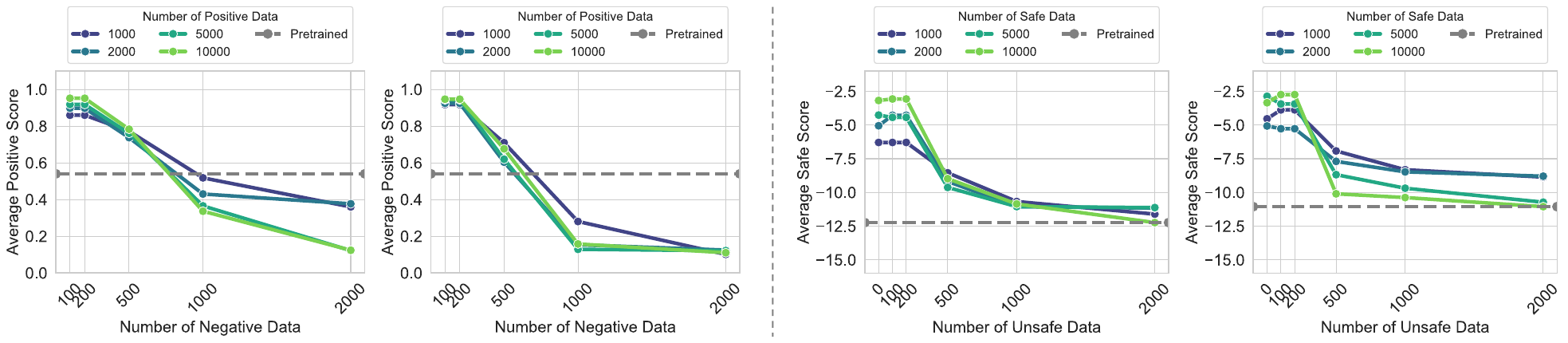}
    \caption{\textbf{Experimental results for validating the existence of \rebound{rebound} (left: IMDb, right: Beavertails).} The left part of each sub-figure is the performance of Gemma-2B while the right is Llama2-7B, respectively. Models trained with more \forward{positive} data initially perform better but perform worse after fine-tuning with \inverse{negative} data.}
    \label{exp2: existence}
\end{figure*}

\begin{figure*}[ht]
    \centering
    \begin{subfigure}{0.48\textwidth}
        \centering
            \includegraphics[width=\textwidth]{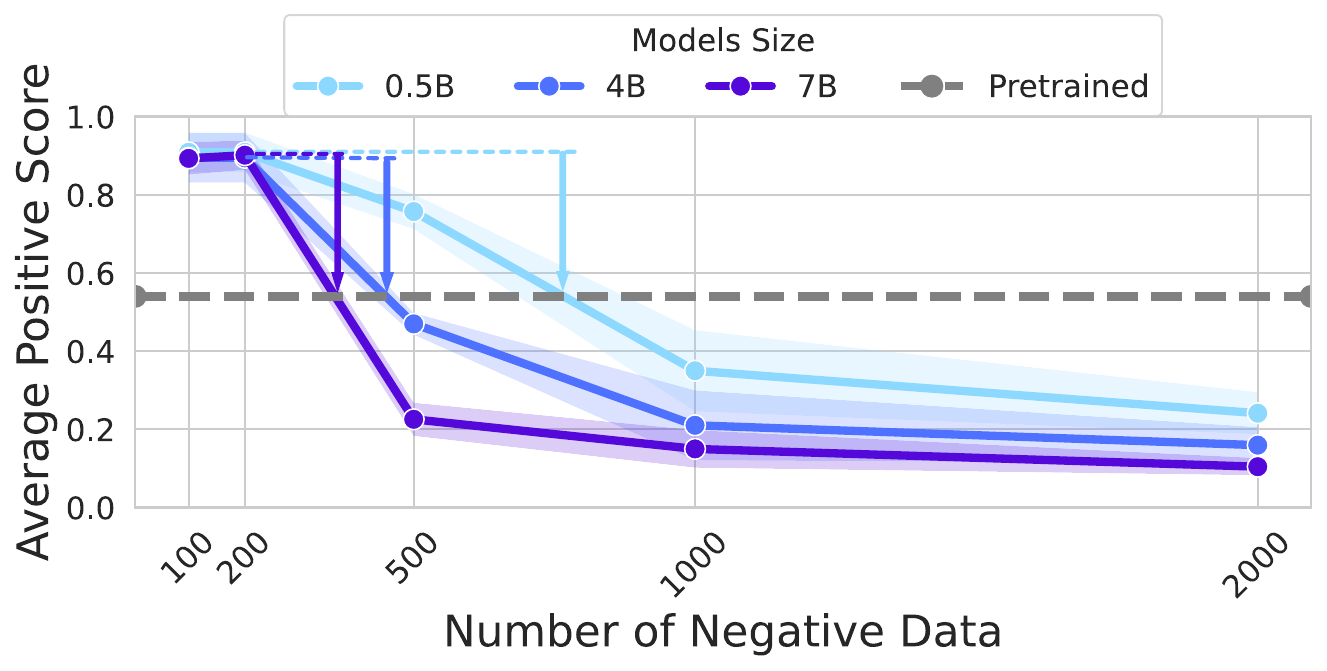}
    \end{subfigure}%
    \hfill
    \begin{subfigure}{0.48\textwidth}
        \centering
            \includegraphics[width=\textwidth]{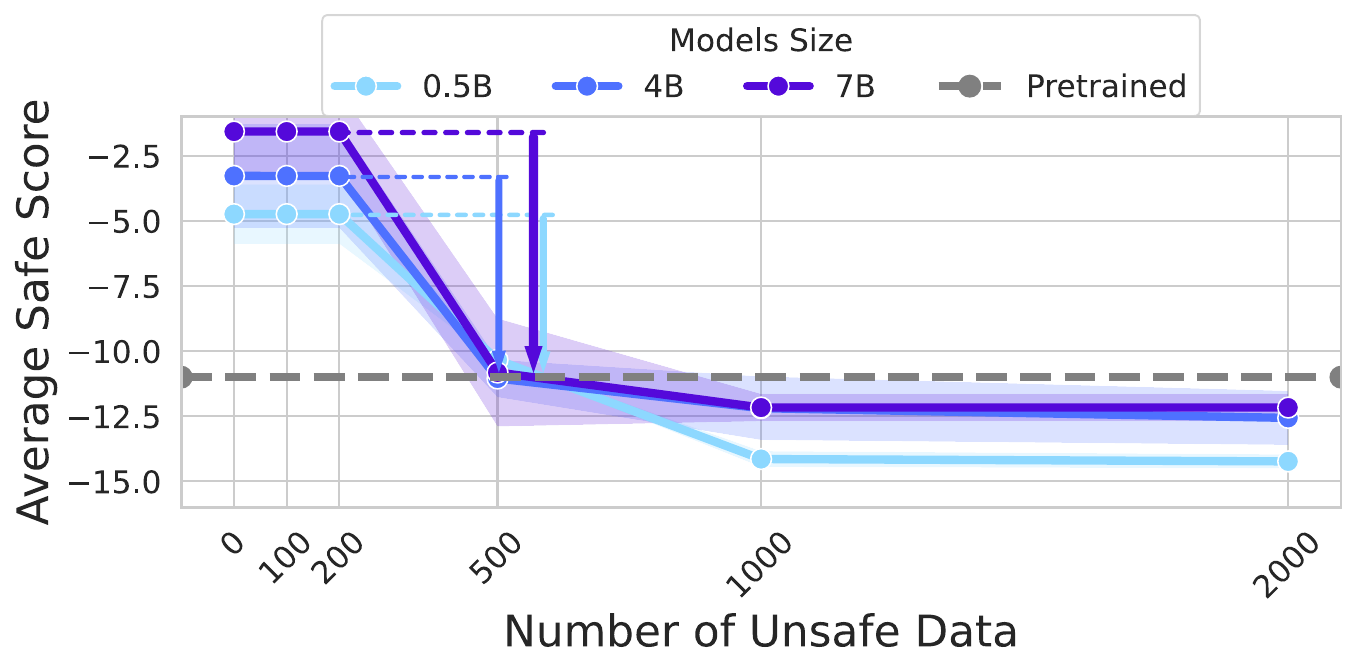}
    \end{subfigure}
    \caption{\textbf{Experimental results for validating \rebound{rebound} increases with model size (left: IMDb, right: Beavertails).} All single line covers positive data volume settings as Figure \ref{exp2: existence}, with shadow denoting std. As the model size increases, the performance of the aligned model deteriorates more rapidly after fine-tuning with \inverse{negative} data.}
    \label{exp2: model-size}
\end{figure*}

\begin{figure*}[ht]
    \centering
    \begin{subfigure}{0.48\textwidth}
        \centering
            \includegraphics[width=\textwidth]{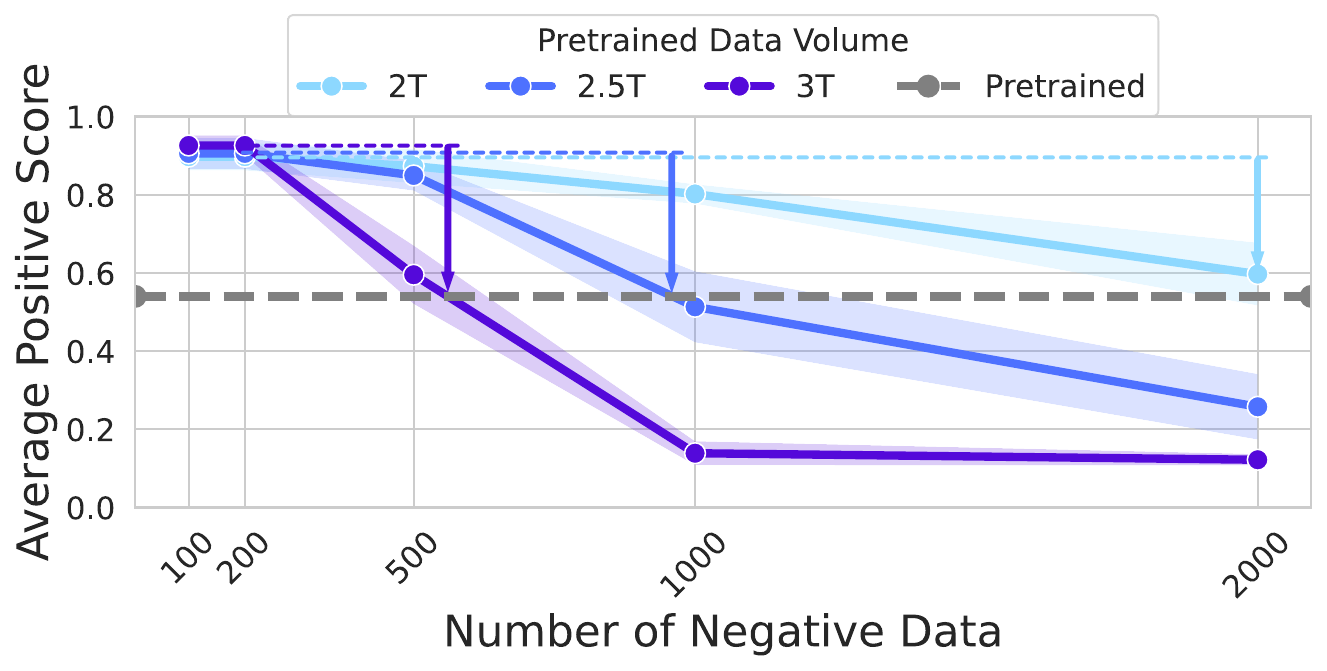}
    \end{subfigure}%
    \hfill
    \begin{subfigure}{0.48\textwidth}
        \centering
            \includegraphics[width=\textwidth]{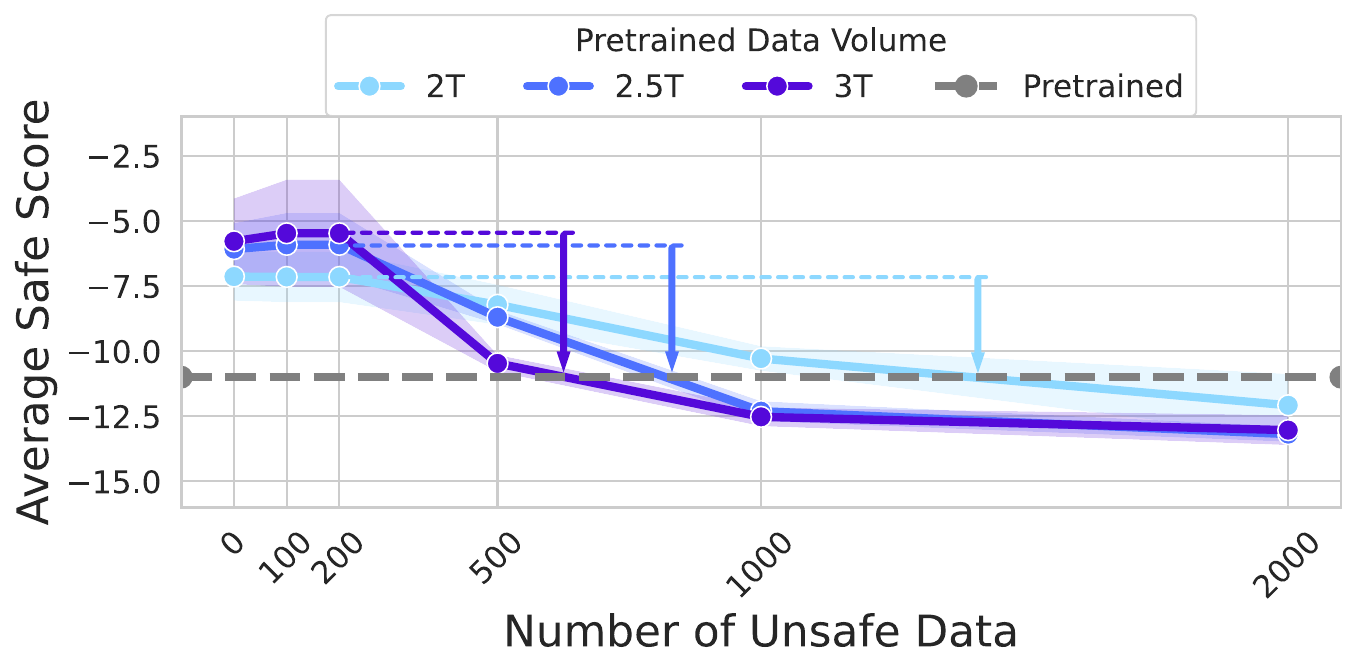}
    \end{subfigure}
    \caption{\textbf{Experimental results for validating \rebound{rebound} increases with pre-training data volume. (left: IMDb, right: Beavertails).} Each line covers positive data settings as Figure \ref{exp2: existence}, with shadow denoting std. As pre-training data volume increases, aligned model performance deteriorates more rapidly after fine-tuning with \inverse{negative} data.}
    \label{exp2: pre-train-data}
\end{figure*}

\paragraph{Experiment Design.} We verify the existence of \rebound{rebound} by arguing that for post-trained models, the more \forward{positive} the post-trained models' performance, the more \inverse{negative} it becomes after \textit{inverse alignment}. We validate tasks involving two opposing characteristics (\textit{e.g.} safe and unsafe). We first train slices \(\{\vtheta_1, \vtheta_2, ..., \vtheta_n\}\) based on a pre-trained model \(\vtheta_0\) using positive (\textit{e.g.,} safe) data of various volumes. Then we perform inverse finetuning on these slices using negative data (\textit{i.e.,} unsafe).

\paragraph{Tasks and Datasets.} We select two tasks: positive generation and single-turn safe conversation. For the former, we use the data classified as positive or negative in the IMDb dataset \citep{maas2011learning}. Referring to \cite{rafailov2024direct}, we use the first 2-8 tokens of each complete text as a prompt for LLMs to generate the subsequent content. For the latter, we use the data classified as safe and unsafe in Beavertails \citep{ji2024beavertails}. We organized the positive sample sizes into \{1000, 2000, 5000, 10000\}, while the negative sample sizes, were divided into \{100, 200, 500, 1000, 2000\}.

\paragraph{Evaluation and Metrics.} We collect the model's responses on the reserved test prompts. Then we use score models provided by existing research to complete the performance evaluation. For positive style generation, we refer to \cite{rafailov2024direct} and use the Sentiment Roberta model \citep{hartmann2023more} to classify the responses, taking the proportion of all responses classified as positive as the model score. For single-turn safe dialogue, we use the cost model \citep{dai2024safe} to score the safety of each response, using the average score of all responses as the model score.

\paragraph{Experiment Results.} We evaluate the \rebound{rebound} phenomenon on Llama2-7B \citep{touvron2023llama} and Gemma-2B \citep{team2024gemma}. The experimental results in Figure \ref{exp2: existence} show that, for models fine-tuned with a larger amount of positive sample data, their performance drops quicker under only a small amount of negative sample fine-tuning. Subsequently, the performance decline slows down and tends to stabilize. This result also confirms the previous conclusion: the initial rapid decline of model's performance is due to \rebound{rebound}, as the model is far from the pre-trained distribution; while the later stabilization of the countermeasure is due to \resist{resistance}, as the model is already close to the pre-trained distribution. 

To assess the generalizability of the \rebound{rebound} phenomenon, we perform additional ablation studies focusing on alignment algorithms, evaluation metrics, and fine-tuning directions. The results consistently validate the presence of the \rebound{rebound} phenomenon across language models. Further experimental details are available in Appendix \ref{app: add_exp_rebound}.

\subsection{Internal Factor of \rebound{Rebound}}
\label{exp:internal}

We conduct an in-depth analysis of the internal factors affecting the \rebound{rebound} phenomenon since it is crucial for the robustness of LLMs alignment. In particular, we investigate how model parameter scale and the amount of pre-training data influence \textit{rebound}. Owing to space constraints, further experimental analyses are included in Appendix \ref{app: add_exp_fac}.

\paragraph{\rebound{Rebound} Increases with Model Size.}  
To investigate how the \rebound{rebound} phenomenon varies with model size, we conducted experiments on Qwen models \citep{bai2023qwen} with parameter scales of 0.5B, 4B, and 7B. The experimental results in Figure \ref{exp2: model-size} show that as the model parameter size increases, the initial performance decline due to negative data fine-tuning is faster, while the subsequent decline is slower. This indicates that as the parameter size increases, there is an increase in \rebound{rebound} in response to both positive and negative data, further suggesting a positive correlation between model \textit{elasticity} and parameter scale. 

\paragraph{\rebound{Rebound} Increases with Pre-training Data Volume.} To verify that \rebound{rebound} increases with the growth of pre-training data, we vary pre-training slices (2.0T, 2.5T, and 3.0T) released by TinyLlama \citep{zhang2024tinyllama} and conduct the same experiment. As shown in Figure \ref{exp2: pre-train-data}, when the pre-training data volume increases, the initial performance decline due to negative data fine-tuning is faster, while the subsequent decline is slower. It demonstrates that larger pre-training data volumes reinforce the \rebound{rebound} of LLMs, which is consistent with the inference proposed in Theorem \ref{theorem: main}. 

\section{Conclusion and Outlook}
Our work uncovers a possible mechanism underlying the fragility of alignment: the \textit{elasticity} of language models. We demonstrate that upon perturbation, the normalized compression rate of language models changes inversely with dataset size, making them more inclined to retain pre-training distributions while forgetting alignment distributions, thus resisting alignment. Through extensive experiments, we validate the universality of this \textit{elasticity} effect, observing that the \textit{elasticity} strengthens as model size and pre-training data volume scale up. Fundamentally, our findings support a core assertion: language models exhibit \textit{elasticity}, and thereby inherently resist alignment.

\subsection{Limitations}
\label{sec:limitations_and_future_work}
Theory-wise, the primary limitation of our work is our specification of the \textit{mass distribution} (Assumption \ref{ass: pareto}), and empirical studies on the exact form of this distribution shall be valuable. Experiment-wise, we have not systematically validated \textit{elasticity} throughout the entire lifecycle of pre-training and alignment phases due to cost constraints. In future works, we plan to focus on whether this phenomenon is universally applicable, such as in multimodal models \citep{ji2024align, huh2024position}. Additionally, we aim to theoretically uncover the relationship between model \textit{elasticity} and \textit{scaling laws} \citep{kaplan2020scaling, xiao2024densing}, specifically determining the amount of training data required for \textit{elasticity} to manifest and quantitatively analyzing whether \textit{elasticity} intensifies as model parameters and pre-training data volume increase. 

\subsection{Broader Impacts}
\label{sec:broader_impact}

\paragraph{Rethinking Fine-tuning and Model Weight from Resist Alignment.}
Alignment fine-tuning methods aim to efficiently adjust the distribution of LLMs with minimal data to enhance their safety mechanisms. However, current alignment algorithms for language models may rely on optimization shortcuts, leading to local optima, while lacking involvement with the intrinsic mechanisms of the model \citep{qi2024safety}. From the perspective of language models' \textit{elasticity}, we require more robust alignment methods to ensure that modifications to model parameters extend beyond superficial changes \cite{qi2024safety, cohen2024rl}, thereby preventing the emergence of more effective \textit{inverse alignment} techniques that could give rise to extreme misalignment risks such as deceptive adversarial alignment \citep{IDAIS2024red, bengio2025international,marks2025auditing, baker2025monitoring} or alignment faking \citep{greenblatt2024alignment}. Although methods such as data cleansing during the training phase have proven to be an effective strategy for improving the malleability of a language model’s final distribution \cite{he2024s, qi2024finetuning}, such approaches are not particularly cost-effective and feasible. In Appendix \ref{app: algorithm_resist}, we provide a preliminary discussion of practical steps based on \textit{elasticity} to mitigate \textit{inverse alignment} risks, and we look forward to future research and practical developments grounded in \textit{elasticity}, which will enable the design of more robust alignment algorithms \cite{revel2024seal, zhang2024safe, chen2024robust, sheshadri2024targeted, liu2024rrm, li2025mixpro}, ultimately achieving true and reliable alignment for LLMs.

\paragraph{Rethinking Open-sourcing from Elasticity of LLMs.}
Open-sourcing is a double-edged sword \cite{seger2023open, anwar2024foundational, kukreja2024literature}. The public release of model weights enables the technical community to quickly identify and address potential vulnerabilities and conduct large-scale research on broad safety concerns \citep{touvron2023llama}. These efforts collectively help sustain a balanced offence-defense dynamic \citep{jervis1978cooperation, garfinkel2021does} in the open-source environment and improve model security \citep{eiras2024near}. However, the risk of misuse associated with open-sourcing models is inevitable \citep{seger2023open}. Malicious fine-tuning of open-source models \citep{urbina2022dual, sandbrink2023artificial} and the use of the open-source models to facilitate system jailbreaks \citep{zou2023universal} can pose serious threats to public safety \citep{goldstein2023generative, reuel2024open}. At present, the open-sourcing of powerful language models relies on safety alignment and rigorous security audits \citep{mokander2024auditing} to ensure responsible use. However, if advanced \textit{inverse alignment} techniques become feasible, even carefully aligned and audited open-source models could be reverted to their pre-alignment states at minimal cost, drastically lowering the barrier for jailbreaking aligned models and disrupting the offense-defense balance in the open-source ecosystem \cite{shapiro2010paper}. Therefore, it is crucial to develop alignment algorithms with fine-tuning robustness. We hope that deeper insights into the \textit{elasticity} mechanisms of language models will drive the advancement of untunable alignment methods, enabling models to maintain reliable safety throughout their entire lifecycle \cite{madiega2021artificial}.


\bibliography{custom}
\appendix
\onecolumn

\section{Assumptions and Proofs}
\label{app:theory_assumptions_and_proofs}

\begin{assumption}[Binary Tokens]
For simplicity and without loss of generality, we assume that all datasets share a uniform token table containing only binary tokens (specifically \texttt{0/1}).
\end{assumption}

\begin{thmbox}
\begin{theorem}[Ideal Code Length]
Consider a finite parameter model $p_{\vtheta}\left(\cdot\right)$ training on dataset $\mathcal{D}$, the ideal code length $\mathcal{L}_{p_{\vtheta}}\left(\vx\right)$ of a random response $\vx$ compressed by $p_{\vtheta}$ can be expressed as follows:
\begin{align*}
     \mathbb{E}\left[\mathcal{L}_{p_{\vtheta}}\left(\vx\right)\right] = \left\lceil\frac{\big|\vx\big|}{d}\right\rceil\left\lceil-\sum_{l=1}^{d}\sum_{j=1}^{2^{l-1}}p_{lj}\log{p_{lj}}\right\rceil,
\end{align*}
where $d$ represents the depth of the $\mathcal{T}_{\mathcal{D}}$ after pruning under Definition \ref{def: compress_of_model} protocol, $p_{lj}$ represents the probability values of the \texttt{EOS} nodes for the $j$-th node at the $l$-th layer.
\end{theorem}
\end{thmbox}

\begin{proof}
    When $\big|\vx\big|\leq d$, the compression protocol defined in Definition \ref{def: compress_of_model} can perfectly compress $\vx$. Hence, the expectation of the ideal code length $\mathcal{L}_{p_{\vtheta}}\left(\vx\right)$ satisfies:
    \begin{equation*}
        \mathbb{E}\left[\mathcal{L}_{p_{\vtheta}}\left(\vx\right)\right] = \left\lceil-\sum_{l=1}^{d}\sum_{j=1}^{2^{l-1}}p_{lj}\log{p_{lj}}\right\rceil,
    \end{equation*}
    where $d$ represents the depth of the pruned tree $\mathcal{T}_{\mathcal{D}}^{\prime}$ and $p_{lj}$ represents the probability values of the \texttt{EOS} nodes for the $j$-th node at the $l$-th layer.
    
    Now consider $sd\leq\big|\vx\big|\leq (s+1)d$. Let us suppose that $\vx=(\vx_1\cdots\vx_s\vx_{s+1})$, where $|\vx_k|=d$, for $k\in \{1,\ldots,s\}$ and $|\vx_{s+1}|\leq d$. In this case, $\vx$ cannot be perfectly compressed by the model. Hence, the compression of $x$ needs to be performed in segments, and the length of each segment is not greater than $d$. 
    \begin{align}
\mathbb{E}_{\vx}\left[\mathcal{L}_{p_{\vtheta}}\left(\vx\right)\right] &= \mathbb{E}_{\vx_1}\left[\mathcal{L}_{p_{\vtheta}}\left(\vx_1\right)\right] + \mathbb{E}_{\vx_1}\mathbb{E}_{(\vx_2\ldots\vx_{s+1})}\left[\mathcal{L}_{p_{\vtheta}}\left(\left(\vx_2\ldots\vx_{s+1}\right)\right)\big|\vx_1\right]\nonumber\\
&=\mathbb{E}_{\vx_1}\left[\mathcal{L}_{p_{\vtheta}}\left(\vx_1\right)\right] + \sum_{\vx_1} p(\vx_1)\sum_{(\vx_2\ldots\vx_{s+1})}p(\vx_2\ldots\vx_{s+1}\big|\vx_1)\cdot\mathcal{L}_{p_{\vtheta}}\left((\vx_2\ldots\vx_{s+1})\right)\nonumber\\
&=\mathbb{E}_{\vx_1}\left[\mathcal{L}_{p_{\vtheta}}\left(\vx_1\right)\right] + \mathbb{E}_{(\vx_2\ldots\vx_{s+1})}\left[\mathcal{L}_{p_{\vtheta}}\left(\left(\vx_2\ldots\vx_{s+1}\right)\right)\right]\nonumber\\
&=\sum_{k=1}^{s+1}\mathbb{E}_{\vx_k}\left[\mathcal{L}_{p_{\vtheta}}\left(\vx_k\right)\right]\nonumber\\
&=\left\lceil\frac{\big|\vx\big|}{d}\right\rceil\left\lceil-\sum_{l=1}^{d}\sum_{j=1}^{2^{l-1}}p_{lj}\log{p_{lj}}\right\rceil,  \nonumber
\end{align}
thus the proof is completed. 
\end{proof}

\begin{definition}[Mass Distribution in Token Tree]\label{def: mass}
Consider the sample space $\Omega$ consisting of all responses in dataset $\mathcal{D}$. The probability distribution $\mathcal{P}_{\mathcal{D}}$ of all subtrees at the $d$-th level nodes of $\mathcal{T}_{\mathcal{D}}$ is a mapping from $\Omega$ to $[0,1]$. Let $X_{\mathcal{D}}$ be the random variable representing the probability value taken at each leaf multiplied by the number of leaves. The mass distribution $P_{mass}$ represents the probability that $X_{\mathcal{D}}$ takes the corresponding probability value. According to the definition of $\mathcal{P}_{mass}$, $\mathbb{E}[X_\mathcal{D}]=1$ .
\end{definition}

\begin{remark}[Mixture of Mass Distribution]
 For independently and differently distributed datasets $\mathcal{D}_1, \ldots, \mathcal{D}_{N}$, $\mathcal{D}=\bigcup_{i=1}^{N}\mathcal{D}_{i}$ is a mixture of these datasets. For the pruned trees $\mathcal{T}_{1},\ldots,\mathcal{T}_{N}$ of these datasets with depth $d$, the random variables of their leaf nodes satisfy the following relationship:
\begin{equation*}
    X_{\mathcal{D}} = \frac{\sum_{k=1}^{N}\big|\mathcal{D}_{k}\big|X_{\mathcal{D}_{k}}}{\sum_{k=1}^{N}\big|\mathcal{D}_{k}\big|},
\end{equation*}
where $X_{\mathcal{D}_{k}}$ follows the mass distribution $\mathcal{P}_{mass}^{k}$. For $X_{\mathcal{D}_{k_i}}$ and $X_{\mathcal{D}_{k_j}}$ from different datasets, $X_{\mathcal{D}_{k_i}}$ and $X_{\mathcal{D}_{k_j}}$ are independent of each other.

\end{remark}

\begin{lemma}[Entropy of Mass Distribution]\label{lemma: entropy_mass}
Consider the pruned trees $\mathcal{T}_{k}^{\prime}$ and $\mathcal{T}^{\prime}$ of dataset $\mathcal{D}_k$ and $\mathcal{D}=\bigcup_{i=1}^{N}\mathcal{D}_{i}$. Denote that the response distribution and the mass distribution of $\mathcal{T}^{\prime}$'s tree nodes are $\mathcal{P}^\mathcal{D}$ and $\mathcal{P}_{mass}$. Similarly, we define the response distribution $\mathcal{P}^\mathcal{D}_{k}$ and the mass distribution $\mathcal{P}_{mass}^{k}$. When the number of the leaf nodes $M$ is sufficiently large, the cross-entropy of the response distribution can be rewritten as follows.
\begin{equation*}
    \mathbb{E}_{\vx\sim\mathcal{P}_{k}}\left[-p^{\mathcal{D}_{k}}\log{p^{\mathcal{D}}}\right] = \mathbb{E}_{X_{\mathcal{D}_{k}}\sim\mathcal{P}_{mass}^{k},X_{\mathcal{D}}\sim\mathcal{P}_{mass}}\left[-X_{\mathcal{D}_{k}}\log{X_{\mathcal{D}}}\right] + \log{M},
\end{equation*}
where $p^{\mathcal{D}}$, $p^{\mathcal{D}_k}$ stand for the probability of the leaf nodes of $\mathcal{T}^{\prime}$,$\mathcal{T}_{k}^{\prime}$ while $X_{\mathcal{D}_k}$, $X_{\mathcal{D}}$ stand for the random variables of the probability of the leaf nodes in $\mathcal{T}^{\prime}$,$\mathcal{T}_{k}^{\prime}$, respectively.
\end{lemma}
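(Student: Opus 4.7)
\textbf{Proof proposal for Lemma \ref{lemma: entropy_mass}.}

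The plan is to reduce the claimed identity to an algebraic rearrangement that surfaces the $\log M$ factor naturally, and then interpret the remaining sum as an expectation under the mass distributions. First I would expand the left-hand side over the $M$ leaves of the pruned tree $\mathcal{T}^{\prime}$: writing the expectation explicitly, the cross-entropy–style quantity becomes
\begin{equation*}
    \mathbb{E}_{\vx\sim\mathcal{P}_{k}}\left[-p^{\mathcal{D}_{k}}\log{p^{\mathcal{D}}}\right] \;=\; -\sum_{i=1}^{M} p_i^{\mathcal{D}_k}\log p_i^{\mathcal{D}},
\end{equation*}
where $p_i^{\mathcal{D}_k}$ and $p_i^{\mathcal{D}}$ denote the leaf probabilities of $\mathcal{T}_k^\prime$ and $\mathcal{T}^\prime$ respectively at leaf $i$.

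Next I would introduce the rescaled quantities $X_{\mathcal{D}_k}^{(i)} \coloneqq M\, p_i^{\mathcal{D}_k}$ and $X_{\mathcal{D}}^{(i)} \coloneqq M\, p_i^{\mathcal{D}}$, so that under uniform sampling of the leaf index $i \in \{1,\ldots,M\}$, these scaled variables have marginals matching $\mathcal{P}_{mass}^k$ and $\mathcal{P}_{mass}$ (in particular satisfying $\mathbb{E}[X_{\mathcal{D}_k}] = \mathbb{E}[X_{\mathcal{D}}] = 1$, consistent with Definition \ref{def: mass}). Substituting $p_i^{\mathcal{D}_k} = X_{\mathcal{D}_k}^{(i)}/M$ and $p_i^{\mathcal{D}} = X_{\mathcal{D}}^{(i)}/M$ and splitting the logarithm gives
\begin{equation*}
    -\sum_{i=1}^M \frac{X_{\mathcal{D}_k}^{(i)}}{M}\left(\log X_{\mathcal{D}}^{(i)} - \log M\right) = -\frac{1}{M}\sum_{i=1}^M X_{\mathcal{D}_k}^{(i)}\log X_{\mathcal{D}}^{(i)} \;+\; \frac{\log M}{M}\sum_{i=1}^M X_{\mathcal{D}_k}^{(i)}.
\end{equation*}
The second sum evaluates exactly to $\log M$ because $\sum_i p_i^{\mathcal{D}_k} = 1$ forces $\sum_i X_{\mathcal{D}_k}^{(i)} = M$, accounting for the additive $\log M$ in the lemma.

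It remains to identify the first term as $\mathbb{E}_{X_{\mathcal{D}_{k}}\sim\mathcal{P}_{mass}^{k},X_{\mathcal{D}}\sim\mathcal{P}_{mass}}\left[-X_{\mathcal{D}_{k}}\log{X_{\mathcal{D}}}\right]$. Viewed as an empirical average over the uniformly chosen leaf $i$, it equals the expectation of $-X_{\mathcal{D}_k}\log X_{\mathcal{D}}$ under the joint distribution on $(X_{\mathcal{D}_k}, X_{\mathcal{D}})$ induced by uniform leaf sampling, whose marginals are exactly $\mathcal{P}_{mass}^k$ and $\mathcal{P}_{mass}$. The main obstacle is precisely this step: the lemma's notation pairs the two variables only through their marginal mass distributions, whereas the leaf-coupled sum sees them jointly. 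I would close this gap by invoking the large-$M$ regime—using the independence structure in the remark (the $X_{\mathcal{D}_{j}}$ across different $j$ are independent) together with the fact that $X_{\mathcal{D}}$ is a convex combination of many such independent contributions—so that, asymptotically, the joint empirical average factors and converges by a law-of-large-numbers argument to the product form required by the stated expectation, with the correction vanishing as $M\to\infty$.
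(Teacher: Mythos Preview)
Your proposal is correct and follows essentially the same route as the paper: expand the cross-entropy over the $M$ leaves, substitute $X_{\mathcal{D}_j}^{(i)} = M p_i^{\mathcal{D}_j}$, split the logarithm to extract $\log M$ via $\sum_i p_i^{\mathcal{D}_k}=1$, and read off the remaining $\tfrac{1}{M}\sum_i$ as an expectation under the mass distributions. You are in fact slightly more careful than the paper in flagging the joint-versus-marginal issue in that last identification; the paper simply writes the equality and then handles the independence caveat in a separate remark invoking the large-$M$ approximation, exactly as you propose.
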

\begin{proof}
Let $M$ be the number of leaf nodes of $\mathcal{T}^{\prime}$. According to the definitions of the response distribution $\mathcal{P}$ and mass distribution $\mathcal{P}_{mass}$, we have $Mp^{\mathcal{D}_j}=X_{\mathcal{D}_j}, \forall j \in\{1,\dots,N\}$. 
Therefore,
\begin{align}
    \mathbb{E}_{\vx\sim\mathcal{P}_{k}}\left[-p^{\mathcal{D}_{k}}\log{p^{\mathcal{D}}}\right] &= \sum_{i=1}^{M}-p_i^{\mathcal{D}_k}\log{p_i^{\mathcal{D}}}\nonumber \\
    &=\sum_{i=1}^{M}-\frac{1}{M}X_{i,\mathcal{D}_{k}}\log{X_{i,\mathcal{D}}} + \log{M}\nonumber\\
    &=\mathbb{E}_{X_{\mathcal{D}_{k}}\sim\mathcal{P}_{mass}^{k},X_{\mathcal{D}}\sim\mathcal{P}_{mass}}\left[-X_{\mathcal{D}_{k}}\log{X_{\mathcal{D}}}\right] + \log{M}. \nonumber
\end{align}
\end{proof}

\begin{remark}
    In Lemma \ref{lemma: entropy_mass}, $X_{\mathcal{D}_k}$ are assumed to be independent. However due to $\sum_{i=1}^{M}p_i^{\mathcal{D}_k}=1$, the $X_{\mathcal{D}_k}$ are not actually independent. Considering that $M$ is sufficiently large in our subsequent analysis, we can regard the independence of $X_{\mathcal{D}_k}$ as a good approximation.

\end{remark}

We assume that the mass distribution of the segment follows a heavy-tailed Pareto distribution, with supporting evidence from \citet{mingard2021sgd, zipf1946psychology}.
\begin{assumption}[The Pareto Distribution]\label{ass: pareto}
We assume that the mass distribution of pruned token trees $\mathcal{T}$ of depth $d$, across different datasets, follows a Pareto distribution with identical parameters:
\begin{equation*}
    p_{X}(x)=\begin{cases}
         \frac{\alpha c^{\alpha}}{x^{\alpha+1}} & x \geq c, \\
0 & x < c,
    \end{cases}
\end{equation*}
where $\alpha, c$ are parameters of the Pareto distribution. 
\end{assumption}

\begin{thmbox}
\begin{theorem}[\textit{Elasticity} of Language Models] Consider the pre-training dataset $\mathcal{D}_p = \bigcup_{i=1}^{3}\mathcal{D}_{i}$, the alignment dataset $\mathcal{D}_a$, and the perturbation dataset $\mathcal{D}_t$, with the model $p_{\vtheta}(\cdot)$ trained on $\mathcal{D}=\mathcal{D}_{p} \cup \mathcal{D}_{a} \cup \mathcal{D}_{t}$. Assume that $\mathcal{D}_a \overset{d}{\sim} \mathcal{D}_{2}$, $\mathcal{D}_t \overset{d}{\sim} \mathcal{D}_{3}$, and $\mathcal{D}_{1},\mathcal{D}_{2},\mathcal{D}_{3}$ are each distributed according to a Pareto mass distribution \cite{newman2005power}. As the perturbation data volume $|\mathcal{D}_t|$ varies, $\gamma_{p_{\vtheta}}^{\mathcal{D}_p/\mathcal{D}}$ and $\gamma_{p_{\vtheta}}^{\mathcal{D}_a/\mathcal{D}}$ satisfy:
\begin{equation}
    \begin{split}
        \frac{d\gamma_{p_{\vtheta}}^{\mathcal{D}_a/\mathcal{D}}}{d\,l} &= \Theta \left( -k \frac{ d\gamma_{p_{\vtheta}}^{\mathcal{D}_p/\mathcal{D}}}{d\,l} \right),\\
    \frac{d\gamma_{p_{\vtheta}}^{\mathcal{D}_p/\mathcal{D}}}{d\,l} &< 0,
    \frac{d\gamma_{p_{\vtheta}}^{\mathcal{D}_a/\mathcal{D}}}{d\,l} > 0,
    \end{split}
\end{equation} 
where $l = \frac{|\mathcal{D}_t|}{|\mathcal{D}_a|} \ll 1$, $k = \frac{|\mathcal{D}_p|}{|\mathcal{D}_a|} \gg 1$, and $\{\mathcal{D}_{i}\}_{i=1}^3 $ are datasets of equal cardinality.
\end{theorem}
\end{thmbox}

\begin{proof}
For the sake of convenience in calculations, we first use Lemma \ref{lemma: entropy_mass} to replace the Shannon entropy of response distribution.
\begin{align}
    \frac{d\gamma_{p_{\vtheta}}^{\mathcal{D}_j/\mathcal{D}}}{d\,l} &= \frac{d\left(\mathbb{E}_{\vx\sim\mathcal{P}_{j}}\left[-p^{\mathcal{D}_{j}}\log{p^{\mathcal{D}}}\right]-\log{M}\right)}{d\,l}\nonumber\\
    &= \frac{d\left(\mathbb{E}_{X_{\mathcal{D}_{j}}\sim\mathcal{P}_{mass}^{j},X_{\mathcal{D}}\sim\mathcal{P}_{mass}}\left[-X_{\mathcal{D}_{j}}\log{X_{\mathcal{D}}}\right]\right)}{d\,l}.\nonumber
\end{align}
According to Assumption \ref{ass: pareto}, $X_{D_{j}}$ follows a Pareto distribution with the parameters $\alpha$ and $c$. Hence,
\begin{align}  &\mathbb{E}_{X_{\mathcal{D}_{p}}\sim\mathcal{P}_{mass}^{p},X_{\mathcal{D}}\sim\mathcal{P}_{mass}}\left[-X_{\mathcal{D}_{p}}\log{X_{\mathcal{D}}}\right] \nonumber\\
    &= -\int_{c}^{+\infty} \int_{c}^{+\infty} \int_{c}^{+\infty} \frac{\alpha^3 c^{3\alpha}}{\prod_{i=1}^3x_{i}^{\alpha+1}}\cdot\frac{x_1+x_2+x_3}{3}\cdot\log{\frac{\sum_{i \in \{p,a,t\}}\big|\mathcal{D}_{i}\big|x_{i}}{\sum_{i \in \{p,a,t\}}\big|\mathcal{D}_{i}\big|}}dx_1dx_2dx_3\nonumber\\
    &= -\int_{c}^{+\infty} \int_{c}^{+\infty} \int_{c}^{+\infty} \frac{\alpha^3 c^{3\alpha}(x_1+x_2+x_3)}{3\prod_{i=1}^3x_{i}^{\alpha+1}}\log{\frac{\frac{k}{3}(x_{1}+x_2+x_3)+x_{2}+lx_3}{k+l+1}}dx_1dx_2dx_3\nonumber \\
&\mathbb{E}_{X_{\mathcal{D}_{a}}\sim\mathcal{P}_{mass}^{a},X_{\mathcal{D}}\sim\mathcal{P}_{mass}}\left[-X_{\mathcal{D}_{a}}\log{X_{\mathcal{D}}}\right] \nonumber\\
    &= -\int_{c}^{+\infty} \int_{c}^{+\infty} \int_{c}^{+\infty} \frac{\alpha^3 c^{3\alpha}x_2}{\prod_{i=1}^3x_{i}^{\alpha+1}}\log{\frac{\frac{k}{3}(x_{1}+x_2+x_3)+x_{2}+lx_3}{k+l+1}}dx_1dx_2dx_3\nonumber
\end{align}

where $j=1,2,3$. Therefore, $\frac{d\gamma_{p_{\vtheta}}^{\mathcal{D}_p/\mathcal{D}}}{d\,l}$ and $\frac{d\gamma_{p_{\vtheta}}^{\mathcal{D}_a/\mathcal{D}}}{d\,l}$ can be written as:
\begin{align}\label{eq: gamma}
\frac{d\gamma_{p_{\vtheta}}^{\mathcal{D}_p/\mathcal{D}}}{d\,l} = -\frac{d S_1}{d\;l},\;\;\;\;\frac{d\gamma_{p_{\vtheta}}^{\mathcal{D}_a/\mathcal{D}}}{d\,l} = -\frac{d S_2}{d\;l}\,
\end{align}
where
\begin{align}
S_1=&\int_{c}^{+\infty} \int_{c}^{+\infty} \int_{c}^{+\infty} \frac{\alpha^3 c^{3\alpha}(x_1+x_2+x_3)}{3x_{1}^{\alpha+1} x_{2}^{\alpha+1} x_{3}^{\alpha+1}}\log{\frac{\frac{k}{3}(x_{1}+x_2+x_3)+x_{2}+lx_3}{k+l+1}}dx_1dx_2dx_3\nonumber\\
S_2=&\int_{c}^{+\infty} \int_{c}^{+\infty} \int_{c}^{+\infty} \frac{\alpha^3 c^{3\alpha}}{x_{1}^{\alpha+1} x_{2}^{\alpha} x_{3}^{\alpha+1}}\log{\frac{\frac{k}{3}(x_{1}+x_2+x_3)+x_{2}+lx_3}{k+l+1}}dx_1dx_2dx_3.\nonumber
\end{align}
Proving that $\frac{d\gamma_{p_{\vtheta}}^{\mathcal{D}_a/\mathcal{D}}}{d\,l} = \Theta \left( -k \frac{d\gamma_{p_{\vtheta}}^{\mathcal{D}_p/\mathcal{D}}}{d\,l} \right)$ is equivalent to proving:
\begin{equation}\label{eq: lim}
    \lim_{k \to +\infty,\,l \to 0}\frac{k\cdot \frac{ d\gamma_{p_{\vtheta}}^{\mathcal{D}_p/\mathcal{D}}}{d\,l}+\frac{ d\gamma_{p_{\vtheta}}^{\mathcal{D}_a/\mathcal{D}}}{d\,l}}{k\cdot\frac{d\gamma_{p_{\vtheta}}^{\mathcal{D}_p/\mathcal{D}}}{d\, l}} = 0,
\end{equation}
By substituting (\ref{eq: gamma}) into (\ref{eq: lim}), we have
\begin{align}\label{eq: main}
    \lim_{k \to +\infty,\,l \to 0}\frac{k\cdot \frac{ d\gamma_{p_{\vtheta}}^{\mathcal{D}_p/\mathcal{D}}}{d\,l} + \frac{ d\gamma_{p_{\vtheta}}^{\mathcal{D}_a/\mathcal{D}}}{d\,l} }{k\cdot\frac{d\gamma_{p_{\vtheta}}^{\mathcal{D}_p/\mathcal{D}}}{d\, l}}
    =\frac{\lim_{k \to +\infty,\,l \to 0}\left( \frac{k \cdot d S_1 + d S_2}{d\;l} \right)}{\lim_{k \to +\infty,\,l \to 0}\frac{k\cdot d S_1}{d\;l}}.
\end{align}

Now calculate the values of $\frac{dS_1}{d \;l}$ for the case when $l \to 0$ and $k$ is sufficiently large.
\begin{align}
    &\lim_{l \to 0}\frac{d S_1}{d\;l} \nonumber\\
    =& \alpha^3c^{3\alpha}\lim_{l \to 0} \int_{c}^{+\infty} \int_{c}^{+\infty} \int_{c}^{+\infty} \frac{(x_1 + x_2 + x_3)}{3x_{1}^{\alpha+1} x_{2}^{\alpha+1} x_{3}^{\alpha+1}}\nonumber\\
    &\quad\cdot\frac{(k+1+l)x_3 - (\frac{k}{3}(x_{1}+x_2+x_3)+x_{2}+lx_3)}{(k+1+l)(\frac{k}{3}(x_{1}+x_2+x_3)+x_{2}+lx_3)}dx_1dx_2dx_3 \nonumber\\
    =& \alpha^3c^{3\alpha}\lim_{ l \to 0} \int_{c}^{+\infty} \int_{c}^{+\infty} \int_{c}^{+\infty} \frac{(x_1 + x_2 + x_3)x_3}{3x_{1}^{\alpha+1} x_{2}^{\alpha+1} x_{3}^{\alpha+1}(\frac{k}{3}(x_{1}+x_2+x_3)+x_{2}+lx_3)}dx_1dx_2dx_3 \nonumber\\
    &\quad - \alpha^3c^{3\alpha}\lim_{l \to 0} \int_{c}^{+\infty} \int_{c}^{+\infty} \int_{c}^{+\infty} \frac{(x_1 + x_2 + x_3)}{3x_{1}^{\alpha+1} x_{2}^{\alpha+1} x_{3}^{\alpha+1}(k+1+l)}dx_1dx_2dx_3 \nonumber\\
    =& \alpha^3c^{3\alpha} \int_{c}^{+\infty} \int_{c}^{+\infty} \int_{c}^{+\infty} \lim_{l \to 0} \frac{(x_1 + x_2 + x_3)x_3}{3x_{1}^{\alpha+1} x_{2}^{\alpha+1} x_{3}^{\alpha+1}(\frac{k}{3}(x_{1}+x_2+x_3)+x_{2}+lx_3)}dx_1dx_2dx_3 \nonumber\\
    &\quad - \alpha^3c^{3\alpha}\lim_{l \to 0} \int_{c}^{+\infty} \int_{c}^{+\infty} \int_{c}^{+\infty} \frac{x_1}{x_{1}^{\alpha+1} x_{2}^{\alpha+1} x_{3}^{\alpha+1}(k+1+l)}dx_1dx_2dx_3 \nonumber\\
    =& \alpha^3c^{3\alpha} \int_{c}^{+\infty} \int_{c}^{+\infty} \int_{c}^{+\infty} \frac{x_1 + x_2 + x_3}{3x_{1}^{\alpha+1} x_{2}^{\alpha+1} x_{3}^{\alpha+1}} \nonumber\\
    &\quad\cdot \lim_{l \to 0}\left[\frac{x_3}{\frac{k}{3}(x_{1}+x_2+x_3)+x_{2}}- \frac{x_3^2}{[\frac{k}{3}(x_{1}+x_2+x_3)+x_{2}]^2} l\right]dx_1dx_2dx_3 \nonumber\\
    &\quad - \alpha^3c^{3\alpha}\lim_{l \to 0} \int_{c}^{+\infty} \int_{c}^{+\infty} \int_{c}^{+\infty} \frac{x_1}{x_{1}^{\alpha+1} x_{2}^{\alpha+1} x_{3}^{\alpha+1}(k+1+l)}dx_1dx_2dx_3\label{eq: taylor_expansion_l} \\
    =& \alpha^3c^{3\alpha} \int_{c}^{+\infty} \int_{c}^{+\infty} \int_{c}^{+\infty} \frac{x_1 + x_2 + x_3}{3x_{1}^{\alpha+1} x_{2}^{\alpha+1} x_{3}^{\alpha+1}} \frac{x_3}{\frac{k}{3}(x_{1}+x_2+x_3)+x_{2}}dx_1dx_2dx_3 \nonumber\\
    &\quad - \alpha^3c^{3\alpha}\lim_{ l \to 0} \int_{c}^{+\infty} \int_{c}^{+\infty} \int_{c}^{+\infty} \frac{x_1}{x_{1}^{\alpha+1} x_{2}^{\alpha+1} x_{3}^{\alpha+1}(k+1+l)}dx_1dx_2dx_3 \nonumber\\
    =& \alpha^3c^{3\alpha} \int_{c}^{+\infty} \int_{c}^{+\infty} \int_{c}^{+\infty} \frac{x_1 + x_2 + x_3}{3x_{1}^{\alpha+1} x_{2}^{\alpha+1} x_{3}^{\alpha+1}} \left[\frac{x_3}{(x_{1}+x_2+x_3)}\frac{3}{k} - \frac{x_2x_3}{(x_1+x_2+x_3)^2}\frac{9}{k^2}\right]dx_1dx_2dx_3 \nonumber\\
    &\quad - \alpha^3c^{3\alpha}\lim_{l \to 0} \int_{c}^{+\infty} \int_{c}^{+\infty} \int_{c}^{+\infty} \frac{x_1}{x_{1}^{\alpha+1} x_{2}^{\alpha+1} x_{3}^{\alpha+1}(k+1+l)}dx_1dx_2dx_3 \label{eq: taylor_expansion_k}\\
    =& \alpha^3c^{3\alpha} \int_{c}^{+\infty} \int_{c}^{+\infty} \int_{c}^{+\infty} \frac{x_1}{x_{1}^{\alpha+1} x_{2}^{\alpha+1} x_{3}^{\alpha+1}}\frac{1}{k(k+1)}dx_1dx_2dx_3 \nonumber\\
    &\quad - \alpha^3c^{3\alpha} \int_{c}^{+\infty} \int_{c}^{+\infty} \int_{c}^{+\infty} \frac{1}{x_{1}^{\alpha+1} x_{2}^{\alpha+1} x_{3}^{\alpha+1}}\frac{3x_2x_3}{k^2(x_1+x_2+x_3)}dx_1dx_2dx_3
\end{align}
Here, we apply a Taylor expansion to Equation (\ref{eq: taylor_expansion_l}) and Equation (\ref{eq: taylor_expansion_k}) with respect to $l$ and $\frac{1}{k}$, respectively. 

Similarly, we can calculate the value of $\frac{d S_2}{d\;l}$.
\begin{align}
    &\lim_{l \to 0}\frac{d S_2}{d\;l} \nonumber\\
    =& \alpha^3c^{3\alpha}\lim_{l \to 0} \int_{c}^{+\infty} \int_{c}^{+\infty} \int_{c}^{+\infty} \frac{x_2}{x_{1}^{\alpha+1} x_{2}^{\alpha+1} x_{3}^{\alpha+1}}\nonumber\\
    &\quad\cdot\frac{(k+1+l)x_3 - (\frac{k}{3}(x_{1}+x_2+x_3)+x_{2}+lx_3)}{(k+1+l)(\frac{k}{3}(x_{1}+x_2+x_3)+x_{2}+lx_3)}dx_1dx_2dx_3 \nonumber\\
    =&\lim_{l \to 0} \int_{c}^{+\infty} \int_{c}^{+\infty} \int_{c}^{+\infty} \frac{\alpha^3c^{3\alpha}}{x_{1}^{\alpha+1} x_{2}^{\alpha+1} x_{3}^{\alpha+1}}\cdot\left(\frac{3x_2x_3}{k(x_1+x_2+x_3)}-\frac{1}{k+1+l}\right)dx_1dx_2dx_3
\end{align}

As a result, Equation (\ref{eq: main}) can be written as:
\begin{align}
    & \lim_{k \to +\infty,\,l \to 0}\frac{k\cdot \frac{ d\gamma_{p_{\vtheta}}^{\mathcal{D}_p/\mathcal{D}}}{d\,l}+\frac{d\gamma_{p_{\vtheta}}^{\mathcal{D}_a/\mathcal{D}}}{d\, l}}{k\cdot \frac{ d\gamma_{p_{\vtheta}}^{\mathcal{D}_p/\mathcal{D}}}{d\,l}} \nonumber \\
    =& \frac{\lim_{k \to +\infty,\,l \to 0}\left( \frac{k \cdot d S_1 + d S_2}{d\;l} \right)}{\lim_{k \to +\infty,\,l \to 0}\frac{k\cdot d S_1}{d\;l}} \nonumber \\
    =& \frac{\small
        \begin{aligned}
            \lim_{k \to +\infty,\,l \to 0} & \left( k \cdot \int_{c}^{+\infty} \int_{c}^{+\infty} \int_{c}^{+\infty} \frac{\alpha^3 c^{3\alpha}}{\prod_{i=1}^{3} x_{i}^{\alpha+1}}\left[\frac{x_1}{k(k+1)}-\frac{3x_2x_3}{k^2(\sum_{i=1}^{3} x_i)}\right] dx_1dx_2dx_3 \right. \\
            & \quad\quad\quad\quad \left. + \int_{c}^{+\infty} \int_{c}^{+\infty} \int_{c}^{+\infty} \frac{\alpha^3c^{3\alpha}}{\prod_{i=1}^{3} x_{i}^{\alpha+1}}\cdot\left(\frac{3x_2x_3}{k(\sum_{i=1}^{3} x_i)}-\frac{1}{k+1+l}\right) dx_1dx_2dx_3 \right)
        \end{aligned}
    }{
        \lim_{k \to +\infty,\,l \to 0} k \cdot \int_{c}^{+\infty} \int_{c}^{+\infty} \int_{c}^{+\infty} \frac{1}{\prod_{i=1}^{3} x_{i}^{\alpha+1}}\left[\frac{x_1}{k(k+1)}-\frac{3x_2x_3}{k^2(\sum_{i=1}^{3} x_i)}\right] dx_1dx_2dx_3
    } \nonumber \\
    =& \frac{\lim_{k \to +\infty,\,l \to 0}\int_{c}^{+\infty} \int_{c}^{+\infty} \int_{c}^{+\infty}\frac{\alpha^3 c^{3\alpha}}{\prod_{i=1}^{3} x_{i}^{\alpha+1}}\cdot\frac{l}{(k+1)(k+1+l)}}{\lim_{k \to +\infty,\,l \to 0} k \cdot \int_{c}^{+\infty} \int_{c}^{+\infty} \int_{c}^{+\infty} \frac{1}{\prod_{i=1}^{3} x_{i}^{\alpha+1}}\left[\frac{x_1}{k(k+1)}-\frac{3x_2x_3}{k^2(\sum_{i=1}^{3} x_i)}\right]dx_1dx_2dx_3} \nonumber \\
    =& 0.
\end{align}

Next, we prove that $\frac{ d\gamma_{p_{\vtheta}}^{\mathcal{D}_1/\mathcal{D}}}{d\,l} < 0$. For sufficiently large $k$, since
\begin{align}
    &\lim_{l \to 0}\frac{d S_1}{d\;l} \nonumber \\
    =& \alpha^3c^{3\alpha} \lim_{l \to 0} \int_{c}^{+\infty} \int_{c}^{+\infty} \int_{c}^{+\infty} \frac{x_1}{x_{1}^{\alpha+1} x_{2}^{\alpha+1} x_{3}^{\alpha+1}}\frac{1}{k(k+1)}dx_1dx_2dx_3 \nonumber\\
    &\quad - \alpha^3c^{3\alpha}\lim_{l \to 0} \int_{c}^{+\infty} \int_{c}^{+\infty} \int_{c}^{+\infty} \frac{1}{x_{1}^{\alpha+1} x_{2}^{\alpha+1} x_{3}^{\alpha+1}}\frac{3x_2x_3}{k^2(x_1+x_2+x_3)}dx_1dx_2dx_3 \nonumber \\
    = & \int_{c}^{+\infty} \int_{c}^{+\infty} \int_{c}^{+\infty}\frac{\frac{k}{k+1}x_1(x_1+x_2+x_3)-3x_1x_3}{k(x_1+x_2+x_3)x_{1}^{\alpha+1} x_{2}^{\alpha+1} x_{3}^{\alpha+1}}dx_1dx_2dx_3 \nonumber\\
    \coloneqq & \frac{1}{k} \int_{c}^{+\infty} \int_{c}^{+\infty} \int_{c}^{+\infty} \frac{f(x_1,x_2,x_3)}{g(x_1,x_2,x_3)}dx_1dx_2dx_3.
\end{align}
For $ c < u < v < w$, let
\begin{equation}
    \begin{aligned}
    \hat{f} = x_1(x_1+x_2+x_3) - &3x_1x_3 = x_1(x_1+x_2-2x_3), \\
    h(u,v,w) &= \sum_{\pi \in S_3} f(\pi(u,v,w)), \\
    \hat{h}(u,v,w) &= \sum_{\pi \in S_3}\hat{f}(\pi(u,v,w)).
\end{aligned}
\end{equation}
Here, $S_3$ denotes the symmetric group of degree 3 and $\pi \in S_3$ denotes a permutation in $S_3$. Observe that
\begin{equation}
    \begin{aligned}
    \lim_{k \to +\infty} h(u,v,w) =& {\hat{h}(u,v,w)}^{-}, \\
    \hat{h}(u,v,w) =&  \sum_{\pi \in S_3}\hat{f}(\pi(u,v,w)) \\
    =& \sum_{ \substack{\pi \in S_3 \\ (u_0,v_0,w_0)=\pi(u,v,w)}}u_0(u_0+v_0-2w_0) \\
    =& \sum_{\substack{\pi \in S_3 \\ (u_0,v_0,w_0)=\pi(u,v,w)}} \frac{1}{2}(u_0-v_0)^2 >0.
    \end{aligned}
\end{equation}
Hence, we have
\begin{align}
    \lim_{k \to +\infty} h(u,v,w) >0.
\end{align}
Therefore,
\begin{align}
    \lim_{k \to +\infty} \frac{h(u,v,w)}{g(u,v,w)} =  \lim_{k \to +\infty} \frac{\Theta(1)}{\Theta(k)} = 0^{+}.
\end{align}
Therefore,
\begin{align}
    \lim_{k \to +\infty,\, l \to 0}\frac{d S_1}{d\;l} = & \lim_{k \to +\infty} \iiint_{(u,v,w)\in (c, +\infty)^3:u<v<w} \frac{h(u,v,w)}{g(u,v,w)}  dudvdw \nonumber \\ 
    = & \iiint_{(u,v,w)\in (c, +\infty)^3:u<v<w} \lim_{k \to +\infty,\, l \to 0} \frac{h(u,v,w)}{g(u,v,w)} dudvdw \nonumber \\ 
    = & \iiint_{(u,v,w)\in (c, +\infty)^3:u<v<w} \lim_{k \to +\infty,\, l \to 0} 0^{+} dudvdw  \nonumber \\ 
    = & 0^{+}.
\end{align}
Hence, for sufficiently large $k$, we have
\begin{align}
    \lim_{l \to 0} \frac{ d\gamma_{p_{\vtheta}}^{\mathcal{D}_1/\mathcal{D}}}{d\,l} = \lim_{l \to 0} -\frac{dS_1}{d\;l} < 0.
\end{align}
Thus, the proof is completed.
\end{proof}

\section{Additional Experiment Results}
\label{app: exp_result}

\subsection{Additional Experiment of Language Models' \resist{Resistance}}
\label{app: exp_resistance}

To broadly validate the phenomenon of \resist{Resistance} in language models, we extend the experimental setup described in Section \ref{exp:resistance}. Specifically, we use Llama2-7B \citep{touvron2023llama} as the base model and perform a finer-grained snapshot division within \textit{\forward{forward alignment}}, covering a wider range of $k$ and $l$. As the evaluation metric, we measure the change in relative KL divergence between the distributions of model \forward{$\vtheta_l$} and \inverse{$\vtheta_k$}, obtained by applying \textit{\forward{forward alignment}} and \textit{\inverse{inverse alignment}} to slices \forward{$\vtheta_k$} and \inverse{$\vtheta_l$}, respectively, relative to the original distributions of \inverse{$\vtheta_l$} and \forward{$\vtheta_k$}. The experimental results, which is shown in the Table \ref{tab:add_resistance}, demonstrate that the KL divergence under \textit{\inverse{inverse alignment}} is substantially smaller than that under \textit{\forward{forward alignment}}, consistent with the conclusions presented in the main text.

\begin{table}[ht]
\renewcommand{\arraystretch}{1}
\centering
\footnotesize
\resizebox{0.95\textwidth}{!}{
\begin{tabular}{cccccccc}
\toprule
\textbf{Training Steps} & $\vtheta_1$ \textit{vs.} $\vtheta_2$ & $\vtheta_1$ \textit{vs.} $\vtheta_3$ & $\vtheta_1$ \textit{vs.} $\vtheta_4$ & $\vtheta_1$ \textit{vs.} $\vtheta_5$ & $\vtheta_1$ \textit{vs.} $\vtheta_6$ & $\vtheta_2$ \textit{vs.} $\vtheta_3$ & $\vtheta_2$ \textit{vs.} $\vtheta_4$ \\
\midrule
\textit{\forward{Forward Alignment}} & 0.4568 \forward{$\uparrow$} & 0.4932 \forward{$\uparrow$} & 0.5929 \forward{$\uparrow$} & 0.4439 \forward{$\uparrow$} & 0.4109 \forward{$\uparrow$} & 0.3778 \forward{$\uparrow$} & 0.5580 \forward{$\uparrow$} \\
\textit{\inverse{Inverse Alignment}} & 0.2796 \inverse{$\downarrow$} & 0.2863 \inverse{$\downarrow$} & 0.3960 \inverse{$\downarrow$} & 0.3995 \inverse{$\downarrow$} & 0.3412 \inverse{$\downarrow$} & 0.2752 \inverse{$\downarrow$} & 0.3742 \inverse{$\downarrow$} \\
\bottomrule
\end{tabular}
}
\caption{\textbf{KL Divergence between different snapshots.} The results show that \textit{\forward{forward alignment}} exhibits higher values compared to \textit{\inverse{inverse alignment}}, providing evidence for the existence of \resist{Resistance.}}
\label{tab:add_resistance}
\end{table}

\subsection{Additional Experiment of Language Models' \rebound{Rebound}} \label{app: add_exp_rebound}

\paragraph{Ablations on the Different Alignment Algorithms.}
To verify that the consistent \rebound{rebound} phenomenon occurs in different alignment algorithms in language models, we conduct language models' \rebound{rebound} experiments on reinforcement learning from human feedback (RLHF) \citep{ouyang2022training}, direct preference optimization (DPO) \citep{rafailov2024direct}, Kahneman-Tversky optimization (KTO) \citep{ethayarajh2024kto}, and simple preference optimization (SimPO) \citep{meng2024simpo}, using an experimental setup similar to the SFT validation described in Section \ref{exp:rebound}. Considering that reinforcement learning algorithms cannot be directly applied to pretrained models, our experimental procedure consists of the following two steps: (a) SFT of the pretrained model using positive sample data of varying scales as \textit{\forward{forward lignment}}; (b) \textit{\inverse{inverse alignment}} by applying the corresponding reinforcement learning algorithms on the aligned models, using negative sample data of varying scales, where negative samples serve as selected responses and positive samples serve as rejected responses. 

We conduct experiments using the Llama2-7B model on the IMDb task \citep{maas2011learning} with four different RL alignment algorithms. The results, shown in Figure \ref{fig: ablation_algorithm}, indicate that regardless of the alignment algorithm employed, the model exhibits a decline in performance consistent with that observed in SFT as the amount of positive data increases. Moreover, the rate of this performance degradation accelerates with the increasing volume of positive data. The results further confirm that the broad applicability of language models \rebound{rebound} across different alignment algorithms.

\begin{figure}[ht]
    \centering
    \includegraphics[width=\textwidth]{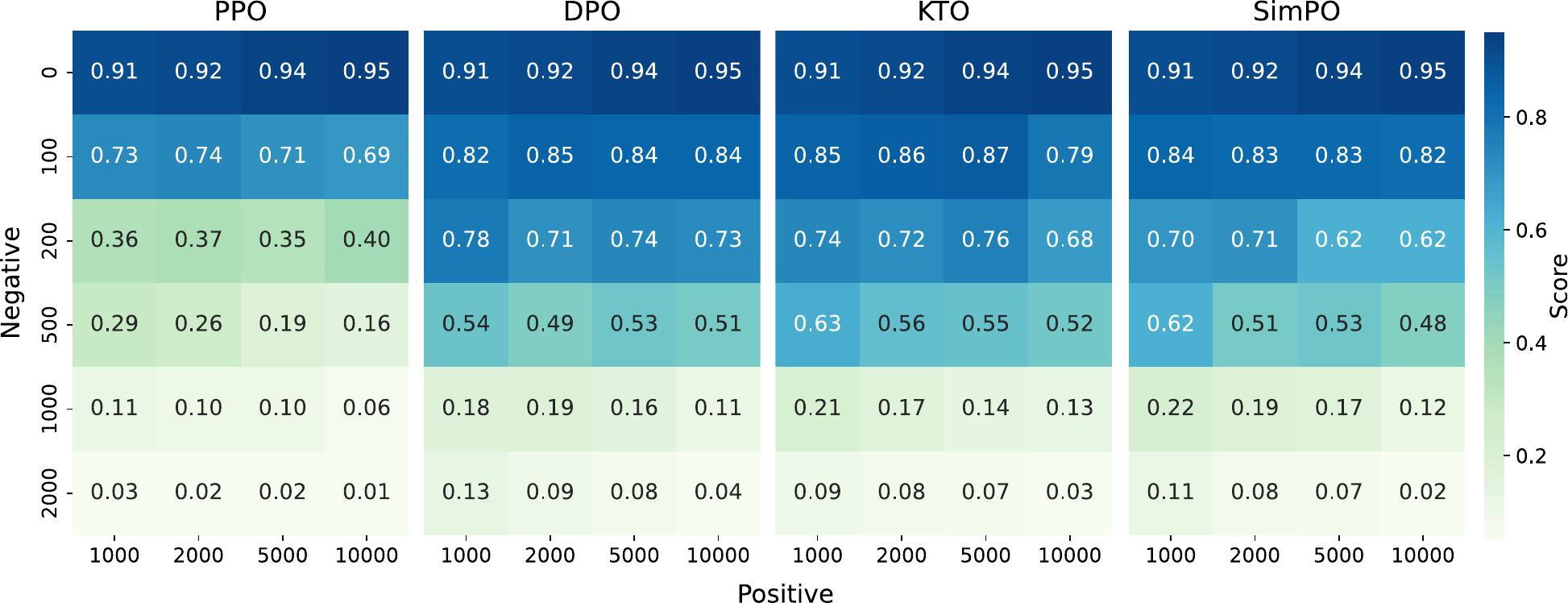}
    \caption{\textbf{Experimental results for validating \rebound{rebound} across different alignment algorithms.} The heatmaps from left to right represent the performance of the PPO, DPO, KTO, and SimPO algorithms after fine-tuning with varying amounts of positive and negative data. Darker colors indicate better model performance in the heatmaps,. Models trained with more \forward{positive} data perform better initially tend to exhibit worse performance after fine-tuneing with \inverse{negative} data. The conclusion holds regardless of the alignment algorithm used.} 
    \label{fig: ablation_algorithm}
\end{figure}

\paragraph{Ablations on the Different Evaluation Metrics.}
In the main experiments discussed in Section \ref{exp:rebound}, we primarily adopt specific evaluation scores to evaluate the \rebound{rebound} phenomenon. Using the score models corresponding to alignment targets as the evaluation metrics is the most direct and important method for the task evaluation. However, to further reinforce the robustness of our findings and address the generality of the \rebound{rebound} phenomenon, we acknowledge the value of incorporating broader distribution-level metrics beyond task-specific scoring models. In particular, we consider the Kullback-Leibler divergence between aligned models and base models as an additional evaluation metric.

Specifically, we conducted the evaluation through the following procedure: a) We first measure the KL divergence between the aligned model and the base model before and after fine-tuning the base model with safety data. b) We then apply fine-tuning with unsafe data until the KL divergence between the models decreased to a sufficiently small value $\epsilon$ (where $\epsilon = 1 \times {10}^{-2}$), and record the amount of unsafe data required as an indicator of the difficulty for the model reverting to its pre-trained distribution. The experimental results are reported in Table \ref{tab: abb_kl_divergence}. Under different models and data scales, we observe that only a small amount of unsafe data is needed for a positively fine-tuned model to revert to the pre-training distribution in terms of KL divergence. Furthermore, the larger the amount of safe fine-tuning, the less unsafe data are required. This observation is consistent with the conclusions drawn in Figure \ref{exp2: existence}.

\begin{table}[ht]
  \centering
  \small

  \begin{subtable}[t]{0.49\textwidth}
    \centering
    \begin{tabular}{lcccc}
      \toprule
      \multirow{2}{*}{\centering \textbf{Base Models}} & \multicolumn{4}{c}{\textbf{Positive Data Amount}} \\
      \cmidrule(lr){2-5}
      & \textbf{1000} & \textbf{2000} & \textbf{5000} & \textbf{10000} \\
      \midrule
      Llama2-7B  & 0.21 & 0.22 & 0.26 & 0.27 \\
      Gemma-2B   & 0.18 & 0.21 & 0.24 & 0.25 \\
      \bottomrule
    \end{tabular}
    \caption{KL divergence between models fine-tuned with varying amounts of safety data and the initial models.}
    \label{tab:safe_kl}
  \end{subtable}
  \hfill
  \begin{subtable}[t]{0.49\textwidth}
    \centering
    \begin{tabular}{lcccc}
      \toprule
      \multirow{2}{*}{\centering \textbf{Base Models}} & \multicolumn{4}{c}{\textbf{Positive Data Amount}} \\
      \cmidrule(lr){2-5}
      & \textbf{1000} & \textbf{2000} & \textbf{5000} & \textbf{10000} \\
      \midrule
      Llama2-7B  & 961  & 844  & 801  & 729  \\
      Gemma-2B   & 923  & 853  & 709  & 598  \\
      \bottomrule
    \end{tabular}
    \caption{Unsafe data amount needed for KL divergence between fine-tuned model and pre-trained model to drop below $\epsilon$.}
    \label{tab:unsafe_revert}
  \end{subtable}
  \caption{\textbf{Experimental results for validating \rebound{rebound} on the KL divergence metrics.} The results show that, under different models, the larger the amount of safe fine-tuning, the less unsafe data is required to revert to the pre-training distribution in terms of KL divergence, providing evidence for the existence of \rebound{Rebound}.}
  \label{tab: abb_kl_divergence}

\end{table}

\paragraph{Ablations on the Reverse Finetuning Settings.}

To rule out the influence of positive data on the \rebound{rebound} phenomenon in language models, we conducted a reverse experimental setup: negative data are used during the SFT stage, while positive data are applied during the \inverse{\textit{inverse alignment}} stage. The experimental results, presented in Figure \ref{rebuttal-reverse}, demonstrate that \textit{elasticity} in language models persists under this reverse setting, exhibiting a consistent trend where larger model sizes correspond to greater \textit{elasticity}. The observation aligns with and further supports the results reported in Figure \ref{exp2: model-size}, highlighting the robustness of \rebound{rebound} phenomenon across different experimental configurations.

\begin{figure}[ht]
    \centering
    \begin{subfigure}{\textwidth}
        \centering
            \includegraphics[width=0.32\textwidth]{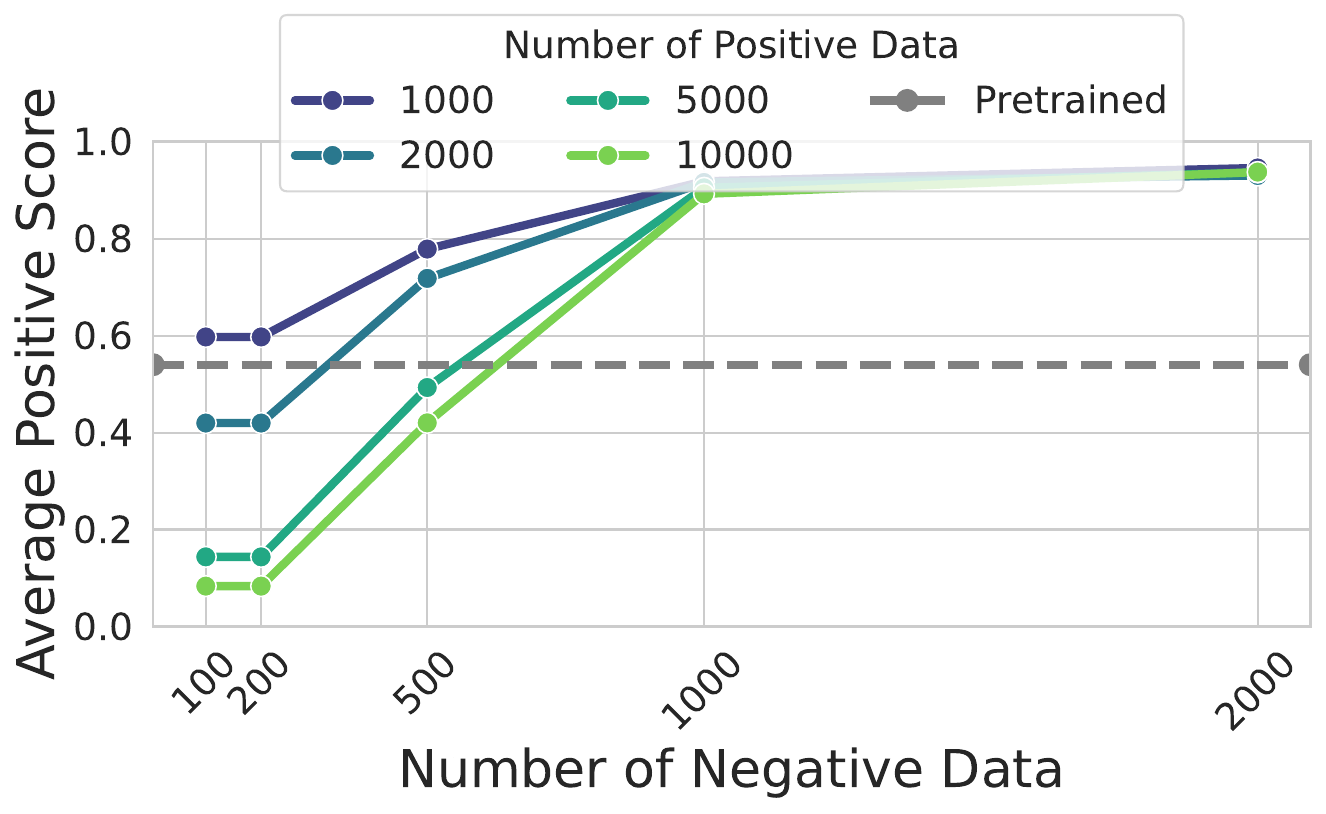}
        \hfill
            \includegraphics[width=0.32\textwidth]{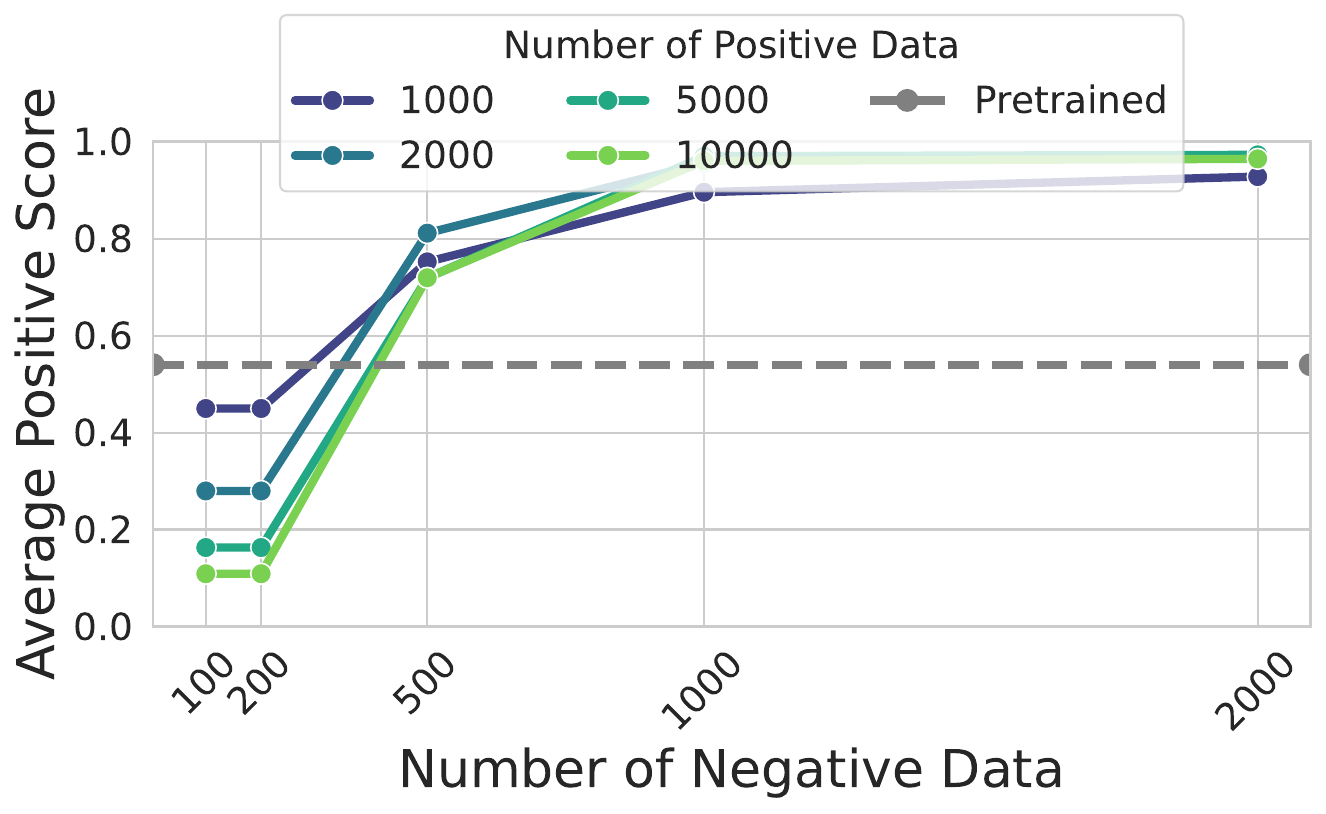}
        \hfill
            \includegraphics[width=0.32\textwidth]{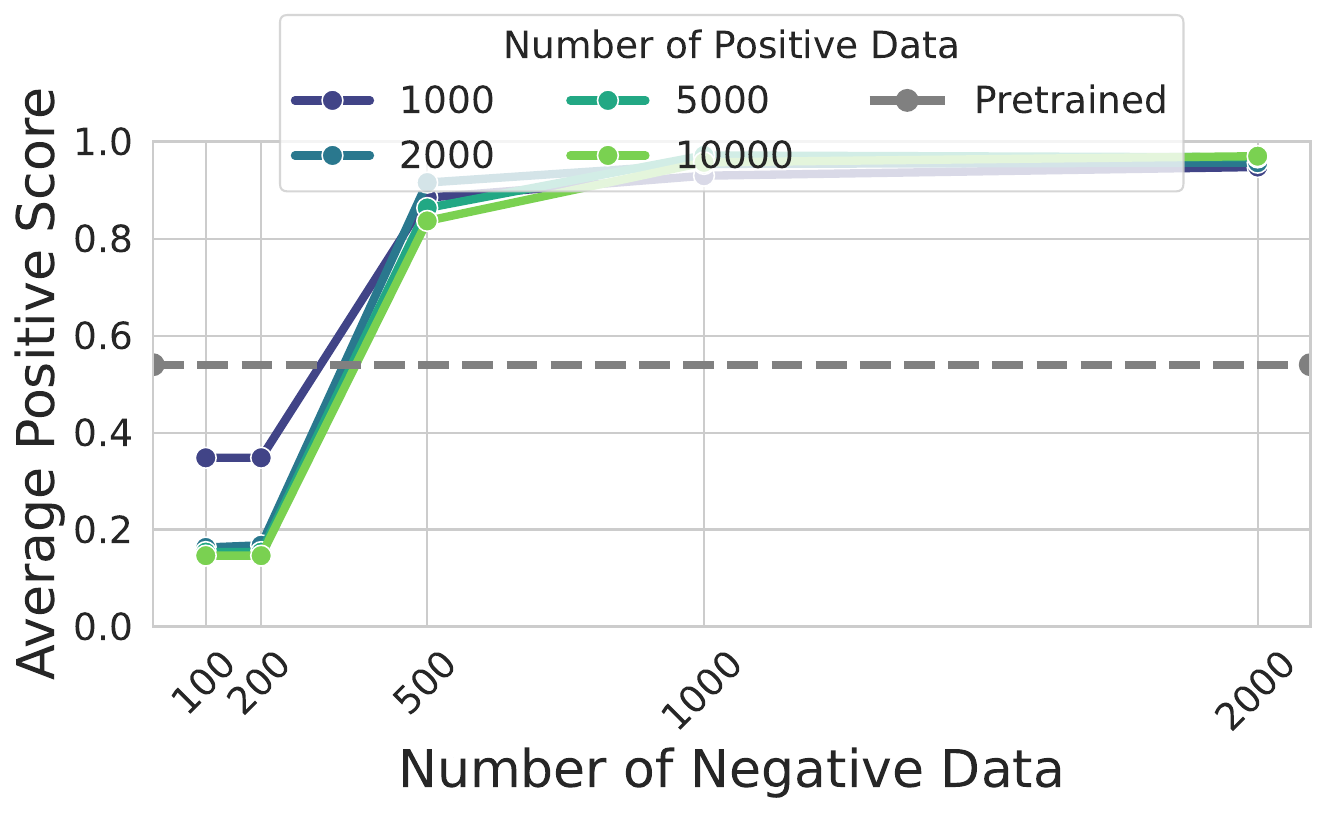}
    \end{subfigure}%
    \caption{\textbf{Reverse Fine-tuning Results on IMDb.} Each subfigure from left to right shows the changes in LLMs with parameter sizes of 0.5B, 4B, and 7B, respectively. Models trained with more \forward{negative} data initially perform worse, but perform better after fine-tuning with \inverse{positive} data. As the model size increases, the performance of the aligned model deteriorates more rapidly after fine-tuning with \inverse{positive} data.}
    \label{rebuttal-reverse}
\end{figure}

\subsection{Additional Experiment of Internal Factor of Language Models' \textit{Rebound}}\label{app: add_exp_fac}

\paragraph{Analysis of the Model Size Scale under Different Alignment Algorithms.}

To verify whether the \rebound{rebound} phenomenon in language models becomes more pronounced as model size increases persists across different alignment algorithms, we conduct experiments using the DPO algorithm. Specifically, we employ the Qwen1.5 model series as the base models and follow an experimental setup similar to that described in Section \ref{app: add_exp_rebound}. We carried out experiments on models with parameters 0.5B, 4B, and 7B. The experimental results, presented in Figure \ref{rebuttal-dpo: model-size}, demonstrate that \rebound{rebound} remains evident under the DPO algorithm, with its strength increasing as model size grows. This finding is consistent with the results reported in Figure \ref{exp2: model-size} and further highlights the generality of \textit{elasticity} in language models.

\begin{figure}[ht]
    \centering
    \begin{subfigure}{\textwidth}
        \centering
            \includegraphics[width=0.32\textwidth]{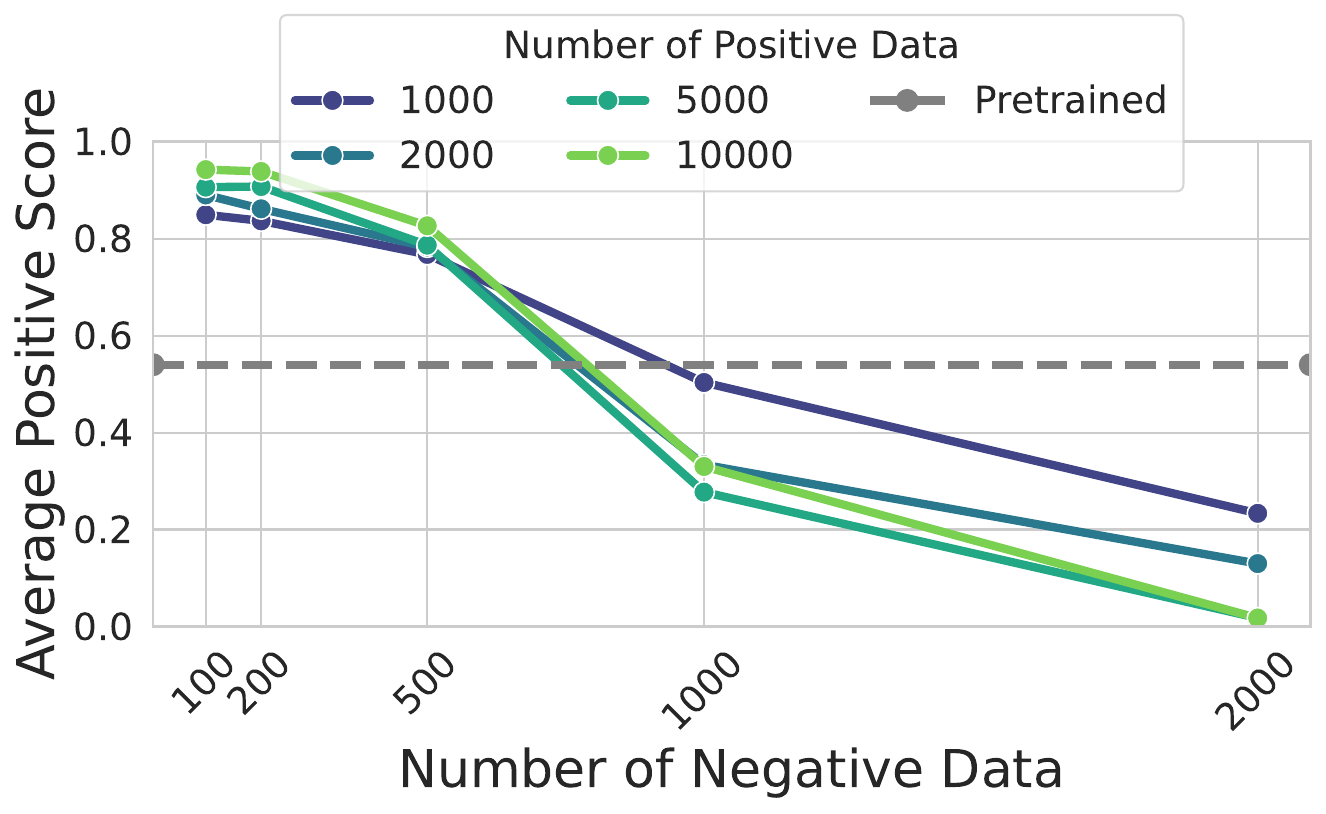}
        \hfill
            \includegraphics[width=0.32\textwidth]{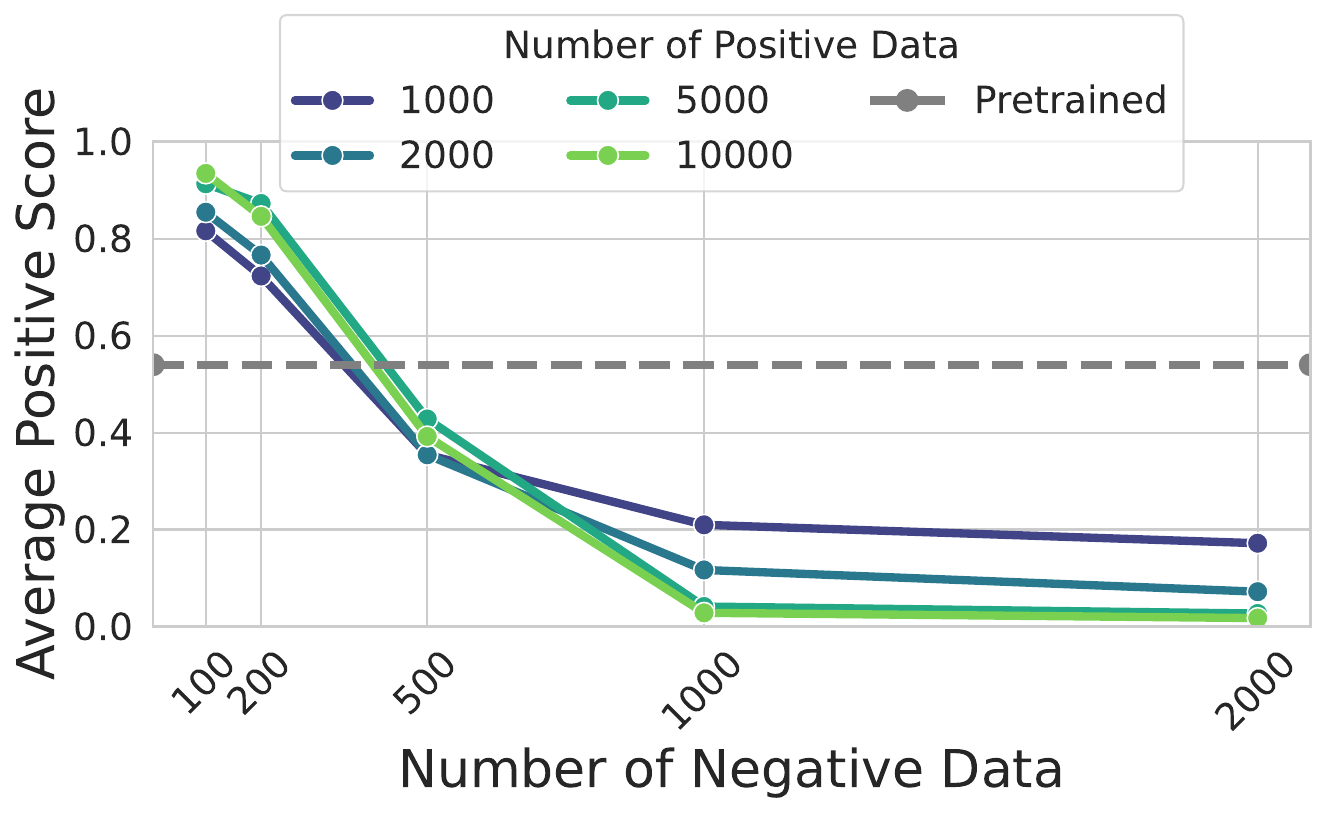}
        \hfill
            \includegraphics[width=0.32\textwidth]{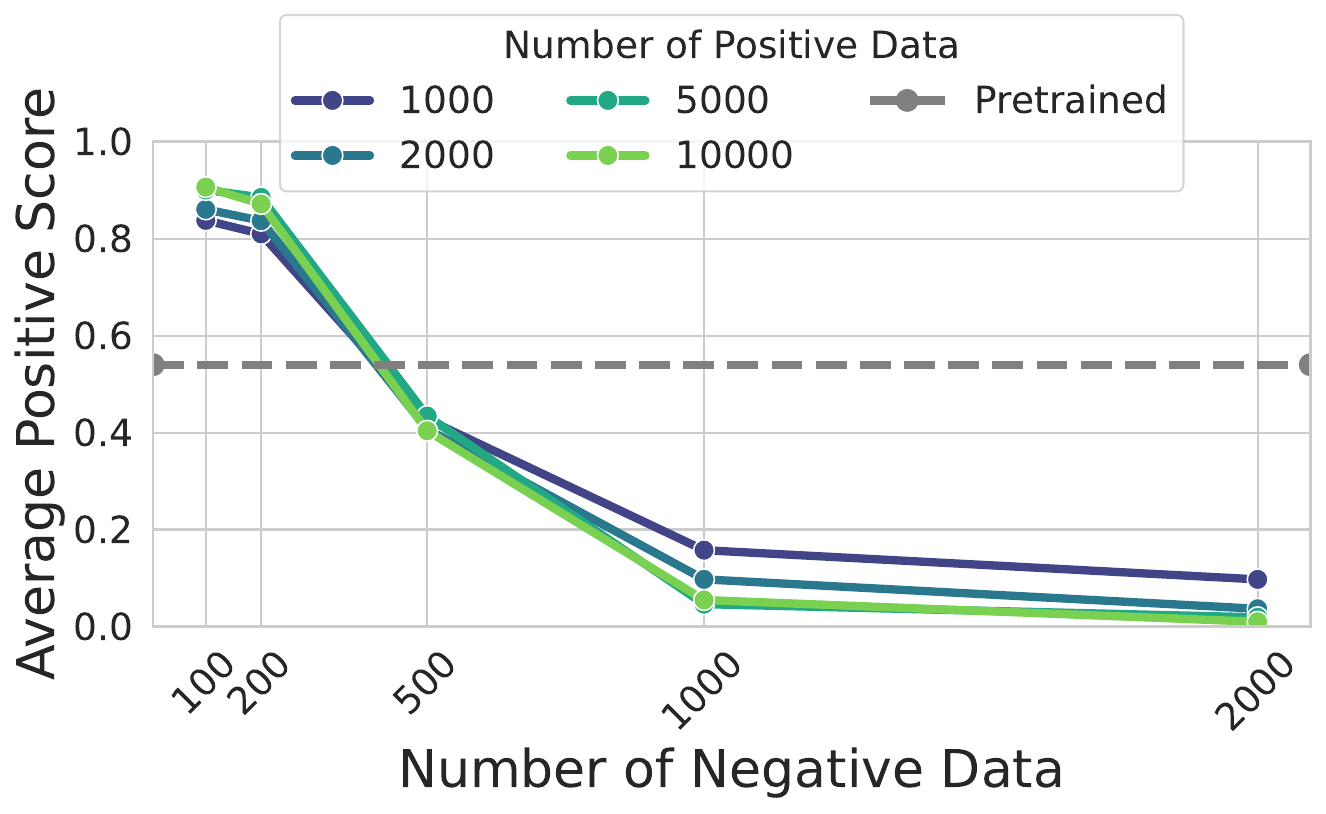}
        \hfill
    \end{subfigure}%
    
    \caption{\textbf{Experimental results for validating \rebound{rebound} increases with model size under DPO fine-tuning.} Each subfigure from left to right shows the changes in LLMs with parameter sizes of 0.5B, 4B, and 7B. As the model size increases, the performance of the aligned model deteriorates more rapidly after fine-tuning with \inverse{negative} data.}
    \label{rebuttal-dpo: model-size}
\end{figure}

\paragraph{Analysis of the Pre-training Data Volume.}
We present the experimental results for a broader range of pre-training data volumes in Figure \ref{rebuttal-100b-1t}. When the pre-training data volume is 0.1T, 0.5T, and 1.0T, the model still demonstrates the phenomenon that \textit{rebound} increases with the volume of pre-training data, which is consistent with the results reported in Figure \ref{exp2: pre-train-data}, where the pre-training data volumes range from 2.0T to 3.0T. Moreover, when comparing the experimental results in Figure \ref{exp2: pre-train-data} and Figure \ref{rebuttal-100b-1t}, we observe a consistent conclusion: as the pre-training data volume increases (from 0.5T to 3.0T), the \rebound{rebound} becomes more pronounced. Specifically, when the pre-training data volume increases, the initial performance decline caused by negative data fine-tuning occurs more rapidly, while the subsequent decline slows down. This indicates that larger pre-training data volumes amplify the \rebound{rebound} effect in language models.

\begin{figure}[ht]
    \centering
    \begin{subfigure}{\textwidth}
        \centering
            \includegraphics[width=0.32\textwidth]{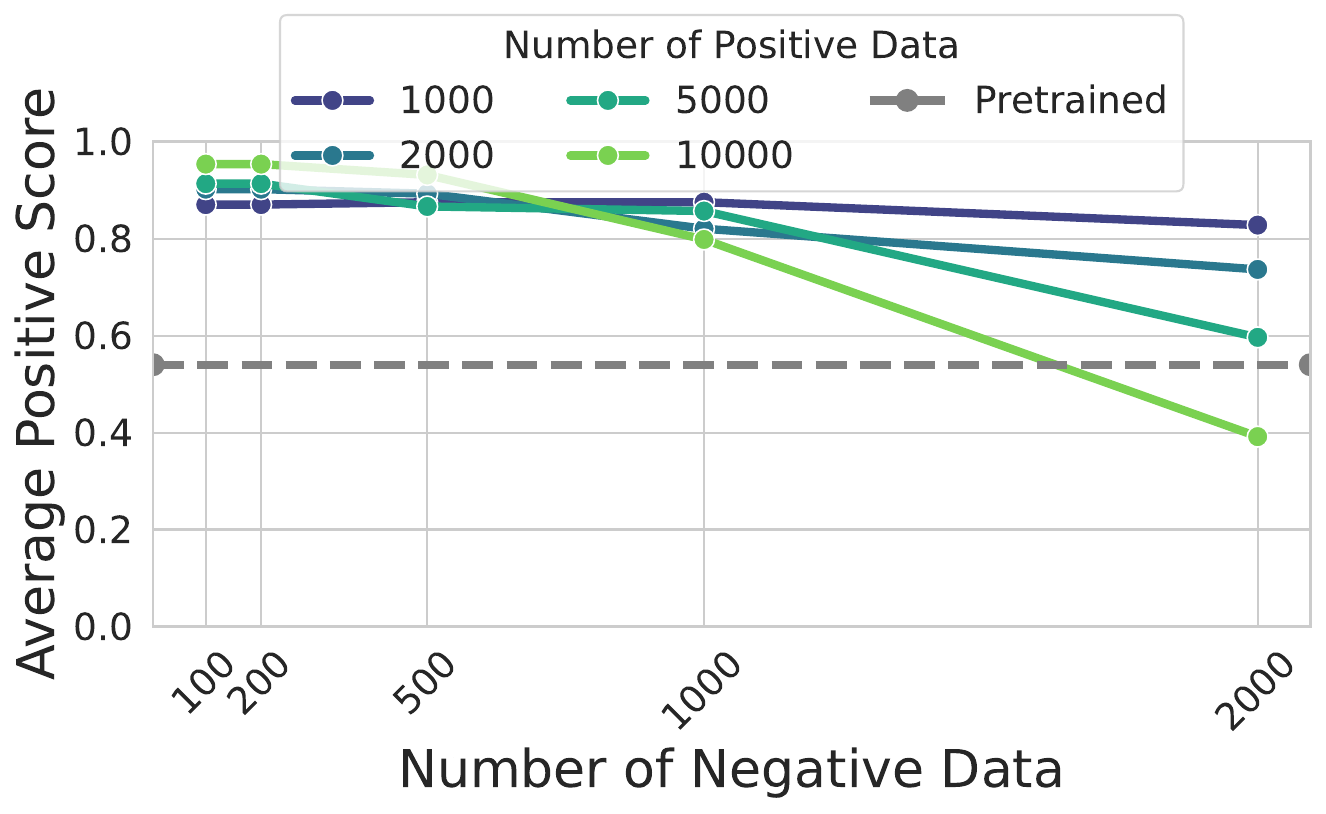}
        \hfill
            \includegraphics[width=0.32\textwidth]{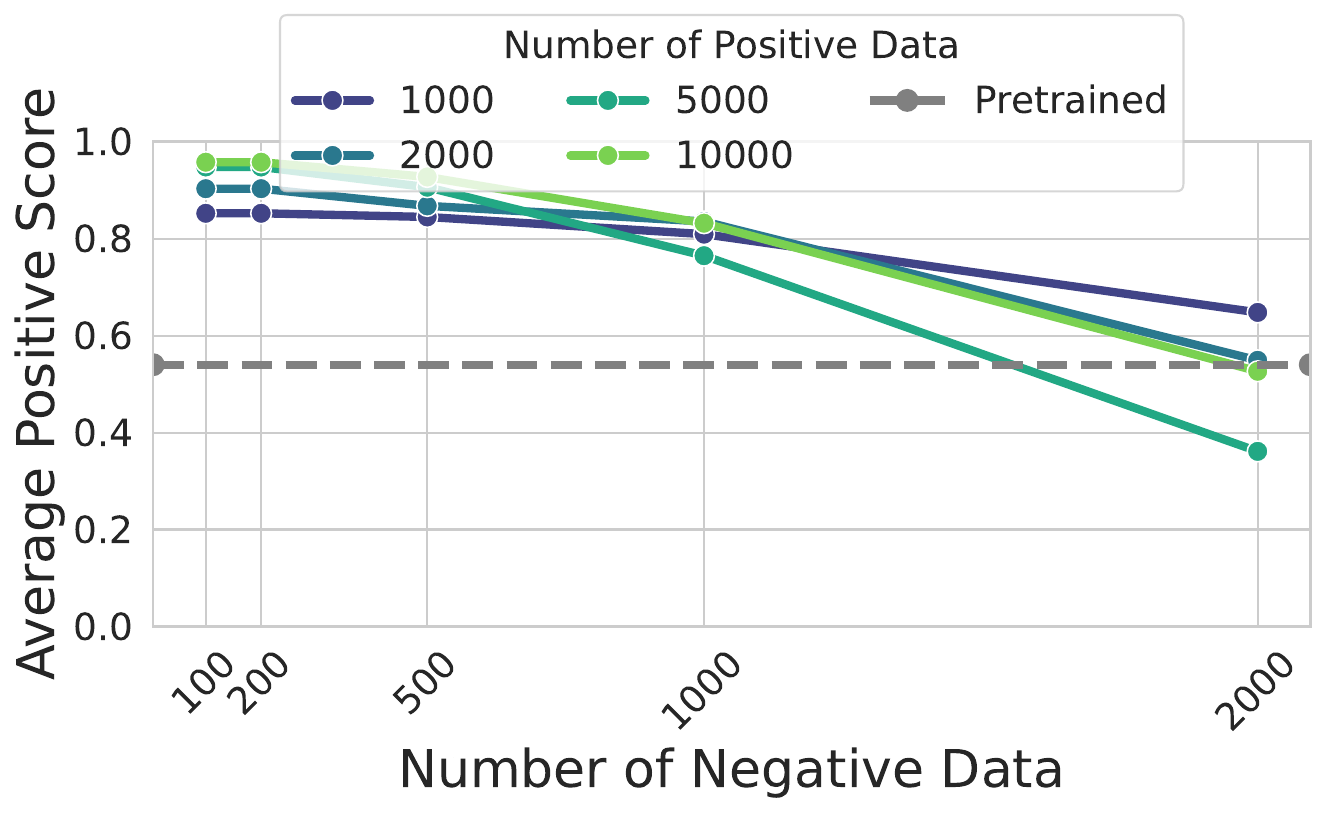}
        \hfill
            \includegraphics[width=0.32\textwidth]{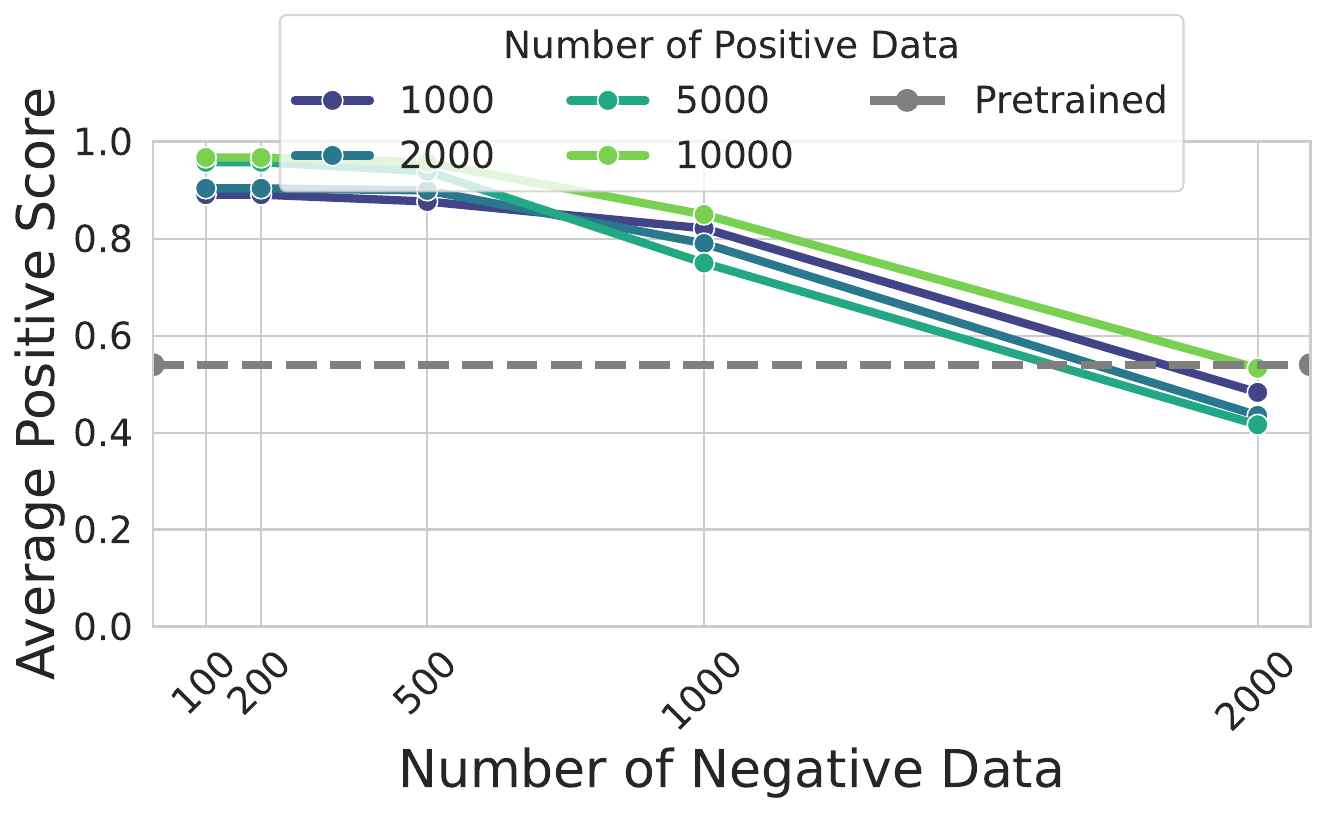}
    \end{subfigure}%
    
    \caption{\textbf{Experimental results for validating \rebound{rebound} increases with model pre-training data volume.} Each subfigure from left to right shows the changes in pre-training data volumes of 0.1T, 0.5T, and 1.0T. As pre-training data volume increases, aligned model performance deteriorates more rapidly after fine-tuning with \inverse{negative} data.}
    \label{rebuttal-100b-1t}
\end{figure}

\section{Further Discussions of Language Models' \textit{Elasticity}}

\subsection{Transfer Learning and Language Models' \textit{Elasticity}}
Transfer Learning is formally defined as follows: consider a domain $\mathcal{D} = \{\mathcal{X}, P(X)\}$, which consists of a feature space $\mathcal{X}$ and a marginal probability distribution $P(X)$, where $X = \{x_1, \cdots, x_n\} \in \mathcal{X}$. A task $\mathcal{T}$ within the domain $\mathcal{D}$ is defined by a label space $\mathcal{Y}$ and a predictive function $f(\cdot)$. Given a source domain $\mathcal{D}_S$ and its corresponding task $\mathcal{T}_S$, as well as a target domain $\mathcal{D}_R$ and its task $\mathcal{T}_R$, transfer learning aims to improve the target predictive function $f_R(\cdot)$ by leveraging knowledge from $\mathcal{D}_S$ and $\mathcal{T}_S$, where either $\mathcal{D}_S \neq \mathcal{D}_R$ or $\mathcal{T}_S \neq \mathcal{T}_R$ \cite{zhuang2020comprehensive, weiss2016survey}. From the perspective of transfer learning, the pre-training and fine-tuning process of language models can be described as follows: Given labeled training data from a source distribution $\mathcal{D}^s = \{(x_n, y_n) \sim p \}_{n=1}^{N_s}$ and data from a target distribution $\mathcal{D}^t = \{(x_n, y_n) \sim q \}_{n=1}^{N_t}$, we first minimize the empirical risk on the source distribution to fit a model: $f^s = \text{argmin}_f \hat{R}(f,\mathcal{D}^s)$, where $\hat{R}(f, \mathcal{D}^s) = \mathbb{E}_{(x, y) \sim p}[l(y, f(x))]$ represents the empirical risk of model $f$ on the source distribution $p$. Subsequently, the model is adapted to the target distribution by minimizing, $f^t=\text{argmin}_{f} \hat{R}(f,\mathcal{D}^t) + \lambda||f-f^s||$, where $||f - f^s||$ measures some distance between the two functions, and $\lambda \geq 0$ is a regularization parameter \cite{murphy2023probabilistic}.

Specifically in the context of language models, transfer learning can almost encompass all models trained under the pre-training-finetuning paradigm. However, transfer learning does not capture the specific details of the model training process, nor does it explicitly incorporate the established theories that explain the varying degrees of difficulty in task transfer for LLMs. Therefore, while transfer learning provides a perspective to analyze various phenomena in LLMs from the standpoint of traditional machine learning, it does not offer concrete tools to explain these \textit{elasticity} phenomena.

Furthermore, in traditional transfer learning settings, positive and negative samples can only be considered as different partitions of the same dataset in classification tasks. However, in our case, these samples are used for autoregressive generation training, where the generation on positive samples and negative samples follows mutually conflicting input-output distributions. This is fundamentally different from classification tasks. Specifically for safe-unsafe generation, steering a language model’s generation from an unsafe input-output distribution to the opposite safe input-output distribution is a major challenge in generative language model research. This cannot be adequately explained by merely considering different partitions of the same data distribution. 

These considerations lead us to conclude that the observed phenomenon in our work represents a novel behavior that cannot be fully accounted for by existing transfer learning theories. It warrants finer-grained theoretical and experimental investigation. Consequently, we propose the concept of \textit{elasticity} to formally name this new phenomenon observed in language models and introduce corresponding theoretical frameworks and empirical methodologies to examine it. This perspective is empirically supported by the experimental results shown in Table \ref{tab:parameter_results}. Taking the Alpaca task as an example, we examine the process in which $\theta_1$ and $\theta_2$ are aligned with $\theta_3$. Since $\theta_2$ has acquired more knowledge about the SFT distribution compared to $\theta_1$, it should, from the perspective of transfer learning, find it easier to learn the distribution of $\theta_3$ than $\theta_1$. However, as shown in the experimental results above, this is not supported, suggesting that transfer learning alone cannot fully account for the observed \resist{resistance} phenomenon. \footnote{As Reviewer qznt's suggestion, we further discuss the relationship between transfer learning and LLMs' elasticity in this section.}

\subsection{Discussions of Practical Steps to Mitigate \textit{Inverse Alignment} Risks} \label{app: algorithm_resist}
The emphasis on \textit{elasticity}  phenomenon in language model alignment highlights a crucial challenge for ensuring the safety and robustness of open-source models throughout their entire lifecycle. As our study demonstrates, the resistance of language models to alignment adjustments is fundamentally rooted in the substantial disparity in data volumes across different training phases. This insight suggests a practical mitigation strategy: customizing the scale of synthetic data through the elasticity mechanism offers a feasible pathway for the development of more robust alignment algorithms in the future.
\begin{itemize}
  \item We discover that the resistance of language models to alignment is essentially due to the significant differences in data volume across various training processes. Therefore, a straightforward idea is to ensure that the training data volumes corresponding to different alignment targets are as similar as possible during the alignment process. This approach helps avoid resistance effects on alignment targets with smaller training data volumes due to subsequent perturbations.
  \item The elasticity theorem provides a feasible method for customizing data ratios, thereby quantitatively measuring the amount of various types of data needed for a model to meet expected goals. Specifically, for an aligned language model, during subsequent alignment processes, the elasticity theorem indicates that there is an inherent loss of elasticity dependent on data volume ratios when learning unrelated distributions. Therefore, to ensure that previously aligned targets remain satisfied during subsequent alignment, the training objective can be transformed into a constrained optimization problem to quantitatively calculate the data volume required for new target alignment.
  \item The current challenge in implementing algorithms based on the elasticity theory, mainly lies in the lack of a sufficient theoretical foundation to characterize the specific features of current datasets. This makes it difficult to distinguish the independent and differently distributed premises of different datasets in the elasticity theory. Since real-world datasets often contain fused features, a possible future research direction is to finely characterize the features of datasets with fused characteristics.
\end{itemize}

\subsection{Discussions of Quantitatively Characterize the Impact of Dataset Size on \textit{Elasticity}}

Considering that, as indicated in Theorem \ref{theorem: main}, the difference in dataset size is a key factor contributing to the \textit{elasticity} phenomenon, quantitatively characterizing the impact of dataset size on \textit{elasticity} is indeed crucial for both theoretical understanding and practical alignment design. In this section, we present preliminary explorations, based on experimental observations, on quantifying the influence of dataset size on elasticity, as well as the current limitations in achieving this goal.

By synthesizing observations from Figure \ref{exp2: pre-train-data} in the main text and Figure \ref{rebuttal-100b-1t} in the appendix, we find that for the tinyllama series models, when the pre-training data volume is merely 0.1T, there is almost no significant resistance phenomenon observed; the positive level of the model's output does not change significantly with an increase in negative data. However, when the pre-training data volume reaches 0.5T, the resistance phenomenon of the language model becomes notably observable. Considering the continuity between pre-training data and model resistance, this suggests that the significant change point of the resistance phenomenon should lie between 0.1T and 0.5T.

Regrettably, there are still challenges in training a series of models with continuously varying pre-training data volumes, and we currently lack sufficiently fine-grained training data samples to conduct such precise series model training. This limitation prevents us from accurately determining the critical point where the resistance phenomenon becomes significant. Therefore, although we can suggest a possible range, accurately quantifying this threshold in practice remains a considerable challenge. We hope that future research will further address this issue by developing more controllable model series and fine-grained pre-training datasets to enable a clearer, quantitative characterization of \textit{elasticity}.

\end{document}